\newcommand{\eqdef}{:=}
\newcommand{\what}{\widehat}
\renewcommand{\P}{\mathbb{P}}
\newcommand{\E}{\mathbb{E}}
\newcommand{\Esp}{\mathbb{E}}
\newcommand{\cA}{\mathcal{A}}
\newcommand{\cH}{\mathcal{H}}
\newcommand{\cS}{\mathcal{S}}
\newcommand{\cO}{\mathcal{O}}
\newcommand{\cR}{\mathcal{R}}
\newcommand{\wt}{\widetilde}
\newcommand{\veps}{\epsilon}
\newcommand{\geps}{\tilde{\veps}}
\newcommand{\s}{\dot{s}}
\newcommand{\agg}{{\rm agg}}
\newcommand{\Real}{\mathbb{R}}
\newcommand{\Nat}{\mathbb{N}}
\newcommand{\Hist}{\mathcal{H}}
\newcommand{\argmax}{\mathop{\mathrm{argmax}}}
\renewcommand{\v}[1]{\mathbf{#1}}
\newcommand{\pen}{\mathbf{pen}}
\newcommand{\lob}{\mathbf{lob}}
\renewcommand{\span}{\lambda}
\newcommand{\cM}{\mathcal{M}}
\newcommand{\AlgoName}{\texttt{OAMS}}
\newcommand{\pio}{{\pi^*}}
\newcommand{\phit}{{\phi^\circ}}
\newcommand{\st}{\textstyle}
\begin{document}

\pagestyle{empty}
\title{Selecting Near-Optimal Approximate State Representations in Reinforcement Learning}
\titlerunning{Selecting Near-Optimal Approximate State Representations in Reinforcement Learning}
\author{Ronald Ortner\inst{1} \and Odalric-Ambrym Maillard\inst{2} \and Daniil Ryabko\inst{3}}
\institute{Montanuniversitaet Leoben, Austria \and The Technion, Israel  \and Inria Lille-Nord Europe, \'{e}quipe SequeL, France, and Inria Chile \\
\email{rortner@unileoben.ac.at, odalric-ambrym.maillard@ens-cachan.org, daniil@ryabko.net}
}
\maketitle

\begin{abstract}
We consider a reinforcement learning setting introduced in~\cite{maimury}
where the learner does not have explicit access to the states of the underlying Markov decision process (MDP).
Instead, she has access to several models that map histories of past interactions to states.
Here we improve over known regret bounds in this setting, and more importantly generalize to the case where the 
models given to the learner do not contain a true model resulting in an MDP representation but only approximations of it.
We also give improved error bounds for state aggregation.
\end{abstract}

\section{Introduction}\label{sec:intro}
Inspired by \cite{frl}, in \cite{maimury} a reinforcement learning setting has been introduced where the learner does not have explicit information about the state space of the underlying Markov decision process (MDP). 
Instead, the learner has a set of \textit{models} at her disposal that map histories (i.e., observations, chosen actions and collected rewards) to states. However, only some models give a correct MDP representation. The first regret bounds in this setting were derived in \cite{maimury}. They recently have been improved in~\cite{oms} and extended to infinite model sets in \cite{iblb}. Here we extend and improve the results of~\cite{oms} as follows. First, we do not assume anymore that the model set given to the learner contains a true model resulting in an MDP representation. Instead, models will only approximate an MDP. Second, we improve the bounds of~\cite{oms} with respect to the dependence on the state space. 

For discussion of potential applications and related work on learning state representations in POMDPs (like predictive state representations \cite{psr}), we refer to \cite{maimury,oms,iblb}. Here we only would like to mention the recent work~\cite{hakama} that considers a similar setting, however is mainly interested in the question whether the true model will be identified in the long run, a question we think is subordinate to that of minimizing the regret, which means fast learning of optimal behavior.

\subsection{Setting}\label{sec:setting}
For each time step $t=1,2,\ldots$, let $\Hist_{t} := \cO\times(\cA\times\cR\times\cO)^{t-1}$ be the set of histories up to time $t$, where $\cO$ is the set of observations, $\cA$ a finite set of actions, and $\cR = [0,1]$ the set of possible rewards.
We consider the following reinforcement learning problem:
The learner receives some initial observation $h_{1} = o_1 \in \Hist_{1}= \cO$.
Then at any time step $t>0$, the learner chooses an action $a_t\in\cA$ based on the current history $h_{t} \in\Hist_{t}$,  
and receives an immediate reward~$r_{t}$ and the next observation~$o_{t+1}$ from the unknown environment.
Thus, $h_{t+1}$ is the concatenation of~$h_{t}$ with $(a_t,r_t,o_{t+1})$.
\smallskip

\textbf{State representation models.}
A \textit{state-representation model} $\phi$ is a function from the set of histories
$\Hist = \bigcup_{t\geq1}\Hist_t$ to a finite set of states $\cS_{\phi}$. 
A particular role will be played by state-representation models that induce a \textit{Markov decision process (MDP)}.
An MDP is defined as a decision process in which at any discrete time $t$, given action $a_t$, 
the probability of immediate reward $r_t$ and next observation $o_{t+1}$, given the past
history $h_t$, only depends on the current observation~$o_t$ 
i.e., $P(o_{t+1},r_{t}|h_ta_t) = P(o_{t+1},r_{t}|o_t,a_t)$, and this probability is also independent of~$t$. 
Observations in this process are called \textit{states} of the environment. We say that a state-representation model~$\phi$ is a \textit{Markov model} of the environment, if the process $(\phi(h_t),a_t,r_t), t\in\Nat$ is an MDP.
Note that such an MDP representation needs not be unique. In particular, we assume 
that we obtain a Markov model when mapping each possible history to a unique state.
Since these states are not visited more than once, this model is not very useful from the practical point of view, however.
In general, an MDP is denoted as $M(\phi)=(\cS_\phi,\cA,r,p)$, where $r(s,a)$ is the mean reward 
and $p(s'|s,a)$ the probability of a transition to state $s'\in\cS_\phi$ when choosing action $a\in \cA$ in state $s\in\cS_\phi$.

We assume that there is an \textit{underlying true} Markov model $\phi^\circ$ that gives a finite and \textit{weakly communicating} MDP, that is, for each pair of states $s,s'\in \cS^\circ:=\cS_{\phi^\circ}$ 
there is a $k\in\Nat$ and a sequence of actions $a_1,\ldots,a_k\in\cA$ such that the probability of reaching state $s'$
when starting in state $s$ and taking actions $a_1,\ldots,a_k$ is positive.
In such a weakly communicating MDP we can define the \textit{diameter} $D:=D(\phi^\circ):=D(M(\phi^\circ))$ to be the expected minimum time it takes to reach any state starting from any other state in the MDP~$M(\phi^\circ)$, cf.~\cite{jaorau}. In finite state MDPs, the \textit{Poisson equation} relates the average reward $\rho_\pi$ of any policy $\pi$ to the single step mean rewards and the transition probabilities. That is, for each \textit{policy}
$\pi$ that maps states to actions, it holds that
\begin{equation} \label{eq:poisson}
	\st \rho_\pi + \lambda_\pi(s) = r(s,\pi(s)) + \sum_{s'\in \cS^\circ}  p(s'|s,\pi(s))\cdot\lambda_\pi(s'),
\end{equation}
where $\lambda_\pi$ is the so-called \textit{bias} vector of $\pi$, which intuitively quantifies the difference in accumulated rewards when starting in different states. Accordingly, we are sometimes interested in the \textit{span} of the bias vector $\lambda$ of an optimal policy defined as
$\rm{span}(\lambda):=\max_{s\in\cS^\circ} \lambda(s) - \min_{s'\in \cS^\circ} \lambda(s')$.
In the following we assume that rewards are bounded in $[0,1]$, which implies that $\rm{span}(\lambda)$ is upper bounded by $D$, cf.~\cite{jaorau,regal}.
\smallskip

\textbf{Problem setting.}
Given a finite set of models $\Phi$ (not necessarily containing a Markov model), we want to construct a strategy that performs as well as the algorithm that knows the underlying true Markov model $\phi^\circ$, including its 
rewards and transition probabilities.
For that purpose we define for the Markov model $\phi^\circ$ the \textit{regret} of any strategy at time~$T$, cf.~\cite{jaorau,regal,maimury}, 
as
\[
   \st \Delta(\phi^\circ,T) := T\rho^*(\phi^\circ)-\sum_{t=1}^T{}r_t\,,
\]
where $r_t$ are the rewards received when following the proposed strategy and
$\rho^*(\phi^\circ)$ is the average optimal reward in $\phi^\circ$, i.e.\
$\rho^*(\phi^\circ) := \rho^*(M(\phi^\circ)) \eqdef \rho(M(\phi^\circ),\pi^*_{\phi^\circ}) := \lim_{T \to \infty} \frac{1}{T}\Esp\big[\sum_{t=1}^T r_t(\pi^*_{\phi^\circ})\big]$
where $r_t(\pi^*_{\phi^\circ})$ are the rewards received when following the optimal policy $\pi^*_{\phi^\circ}$ on $M(\phi^\circ)$.
Note that for weakly communicating MDPs the average optimal reward does not depend on the initial state.

We consider the case when $\Phi$ is finite and the learner has no knowledge of the correct approximation 
errors of each model in $\Phi$. 
Thus, while for each model $\phi\in\Phi$ there is an associated $\veps=\veps(\phi)\geq 0$
which indicates the aggregation error (cf.\ Definition \ref{def:approx} below),
this $\veps$ is unknown to the learner for each model.

We remark that we cannot expect to perform as well as the unknown underlying Markov model, if the model set only provides approximations.
Thus, if the best approximation has error $\veps$ we have to be ready to accept respective error of order $\veps D$ per step, cf.\ the lower bound provided by Theorem \ref{thm:lobo} below.
\smallskip

\textbf{Overview.} We start with explicating our notion of approximation in Section~\ref{sec:approx}, then introduce our algorithm in Section~\ref{sec:alg}, present our regret bounds in Section~\ref{sec:regret}, and conclude with the proofs in Section~\ref{sec:proof}. 

\section{Preliminaries: MDP Approximations}\label{sec:approx}
\textbf{Approximations.}
Before we give the precise notion of \textit{approximation} we are going to use, first note that in our setting the transition probabilities
$p(h'|h,a)$ for any two histories $h,h'\in \cH$ and an action $a$ are well-defined. Then given an arbitrary model~$\phi$ and a state $s'\in\cS_\phi$,
we can define the aggregated transition probabilities $p^\agg(s'|h,a):=\sum_{h':\phi(h')=s'} p(h'|h,a)$.
Note that the true transition probabilities under $\phi^\circ$ are then given by 
$p(s'|s,a):=p^\agg(s'|h,a)$ for $s=\phi^\circ(h)$ and $s'\in \cS^\circ$.

\begin{definition}\label{def:approx}
A model $\phi$ is said to be an $\veps$-approximation of the true model $\phit$ 
if: (i) for all histories $h,h'$ with $\phi(h)=\phi(h')$ and all actions $a$
\begin{eqnarray}
\big|  r(\phit(h),a) -  r(\phit(h'),a) \big|  < \veps   \label{eq:agg-rd1},   \mbox{ and }
\big\|  p(\cdot|\phit(h),a) -  p(\cdot|\phit(h'),a) \big\|_1  < \tfrac{\veps}{2} \label{eq:agg-pd1},  
\end{eqnarray}
and (ii) there is a surjective mapping $\alpha:\cS^\circ\to\cS_\phi$ such that 
for all histories $h$ and all actions~$a$ it holds that
\begin{equation}\label{eq:agg-pd2}
 \sum_{\s' \in \cS_\phi} \Big| p^ \agg(\s' | h,a) -  \sum_{s'\in \cS^\circ:\alpha(s')=\s'} p^\agg(s' | h,a) \Big|  < \tfrac{\veps}{2}.  
\end{equation}
\end{definition}

\setcounter{footnote}{0}
Intuitively, condition \eqref{eq:agg-rd1} 
assures that the approximate model aggregates only histories that are mapped to similar states 
under the true model. Complementary, condition 
\eqref{eq:agg-pd2} guarantees that the state space under the approximate model resembles the true state space.\footnote{
The allowed error in the conditions for the transition probabilities is chosen to be~$\tfrac{\veps}{2}$ so that
the total error with respect to the transition probabilities is $\veps$. This matches the respective condition for MDP approximations 
in Definition \ref{def:mdpapprox}, cf.\ also Section~\ref{sec:err-agg}. 
}
Note that any model will be an $\veps$-approximation of the underlying true model $\phi^\circ$ for sufficiently large $\veps$.

A particular natural case are approximation models $\phi$ which also satisfy
\[
   \st \forall h, h' \in \mathcal H:\;   \phi^\circ(h) =  \phi^\circ(h') \Longrightarrow  \phi(h) =  \phi(h').
\]
That is, intuitively, states in $\cS^\circ$ are aggregated to meta-states in $\cS_\phi$, and \eqref{eq:agg-pd2} holds trivially.

We may carry over our definition of $\veps$-approximation to MDPs. 
This will turn out useful, since each approximate model can be interpreted as an MDP approximation, cf.\ Section~\ref{sec:err-agg} below.
\begin{definition}\label{def:mdpapprox}
An MDP $\bar{M}=(\bar{\cS},\cA,\bar{r},\bar{p})$ is an \textit{$\veps$-approximation} of another MDP $M=(\cS,\cA,r,p)$ if
there is a surjective function $\alpha:\cS\to\bar{\cS}$ such that for all~$s$~in~$\cS$:
\begin{eqnarray}
 \big| \bar r(\alpha(s),a) -  r(s,a) \big|  < \veps  \label{eq:agg-rl},   \mbox{ and }
 \sum_{\s' \in \bar \cS} \Big| \bar p(\s' | \alpha(s),a)  -  \sum_{s':\alpha(s')=\s'} \!\!\!\!\!\!\!\! p(s' | s,a) \Big|  < \veps.  \label{eq:agg-pl}
\end{eqnarray}
\end{definition}

\textbf{Error Bounds for $\veps$-Approximations.}\label{sub:error-bound-app}
The following is an error bound on the error made by an $\veps$-approximation.
It generalizes bounds of \cite{or-alt07} from ergodic to communicating MDPs.
For a proof see Appendix~\ref{app:proofthm1}.

\begin{theorem}\label{thm:aggubo}
Let $M$ be a communicating MDP and $\bar{M}$ be an $\veps$-approximation of $M$.
Then 
\[
    \big| \rho^*(M) -  \rho^*(\bar M) \big|  \;\leq\; \veps\,( D(M) + 1).
\]
\end{theorem}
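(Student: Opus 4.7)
The plan is to reduce both sides of $|\rho^*(M)-\rho^*(\bar M)|\le \veps(D(M)+1)$ to a single ``commutation lemma'' comparing the average-reward Bellman right-hand sides of $M$ and $\bar M$ under the aggregation map $\alpha:\cS\to\bar\cS$ of Definition~\ref{def:mdpapprox}. Recall that for a communicating MDP the Bellman operator $(Th)(s)=\max_a\{r(s,a)+\sum_{s'}p(s'|s,a)h(s')\}$ (and analogously $\bar T$ on $\bar M$) admits an optimal bias $h^*$ with $Th^*=h^*+\rho^*(M)\vone$ and $\mathrm{span}(h^*)\le D(M)$, and one has the dual criterion ``$Th\ge h+c\vone$ pointwise $\Rightarrow \rho^*(M)\ge c$'' (symmetrically for $\le$). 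The strategy is to apply this criterion in $M$ with $h=\bar h^*\!\circ\alpha$, where $\bar h^*$ is the optimal bias on $\bar M$.

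The commutation lemma asserts that for every $\bar h:\bar\cS\to\Real$ and every $s\in\cS$,
\[
  \bigl|T(\bar h\circ\alpha)(s) - \bar T(\bar h)(\alpha(s))\bigr|\;\le\;\veps\bigl(1+\tfrac{1}{2}\mathrm{span}(\bar h)\bigr).
\]
Since $|\max_a f-\max_a g|\le \max_a |f-g|$, it suffices to compare each action separately: the reward gap is $<\veps$ by~\eqref{eq:agg-rl}; for the transition gap, one regroups $\sum_{s'}p(s'|s,a)\bar h(\alpha(s'))=\sum_{\bar s'}\bigl(\sum_{s':\alpha(s')=\bar s'}p(s'|s,a)\bigr)\bar h(\bar s')$, subtracts a constant from $\bar h$ (legitimate because both $\bar p(\cdot|\alpha(s),a)$ and $\sum_{s':\alpha(s')=\cdot}p(s'|s,a)$ are probability distributions summing to $1$), and applies H\"older together with~\eqref{eq:agg-pl} to bound the gap by $\tfrac12\mathrm{span}(\bar h)\cdot\veps$. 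Specializing the lemma to $\bar h=\bar h^*$ — so that the second quantity equals $\bar h^*(\alpha(s))+\rho^*(\bar M)$ — and invoking the dual criterion yields the intermediate bound
\[
  \bigl|\rho^*(M)-\rho^*(\bar M)\bigr|\;\le\;\veps\bigl(1+\tfrac{1}{2}\mathrm{span}(\bar h^*)\bigr).
\]

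What remains, and what I expect to be the main obstacle, is to replace $\mathrm{span}(\bar h^*)$ by $2D(M)$. The natural route is to verify that $\bar M$ is itself (weakly) communicating with $D(\bar M)\le D(M)$, so that the standard bound $\mathrm{span}(\bar h^*)\le D(\bar M)$ from~\cite{jaorau} closes the argument. To compare diameters, given $\bar s,\bar s'\in\bar\cS$, one picks representatives $s\in\alpha^{-1}(\bar s)$, $s'\in\alpha^{-1}(\bar s')$ and an $M$-reach-policy from $s$ to $s'$ of expected length at most $D(M)$, then transports it to $\bar M$ along $\alpha$, using~\eqref{eq:agg-pl} to control the drift between the two hitting-time chains over the length-$D(M)$ trajectory. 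The technical friction is that an arbitrary reach-policy in $M$ need not be $\alpha$-measurable, so the transport must be done via coupling (or by restricting to $\alpha$-measurable reach-policies at the cost of a mild inflation of reach times), with the leftover $\veps$-slack absorbed into the final $\veps\,D(M)$ term.
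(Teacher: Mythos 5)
Your commutation lemma and the dual criterion are sound and do yield
$|\rho^*(M)-\rho^*(\bar M)|\le\veps\bigl(1+\tfrac12\mathrm{span}(\bar h^*)\bigr)$, \emph{provided} $\bar M$ admits an optimal bias $\bar h^*$ with constant gain. But the step you yourself flag as the main obstacle is a genuine gap, and the route you propose for it fails: it is simply false that $\bar M$ is communicating with $D(\bar M)\le D(M)$ (or $\le 2D(M)$). Condition \eqref{eq:agg-pl} is an additive $L^1$-perturbation of the transition kernel, so $\bar M$ may set to zero any transition probability of size up to $\veps/2$ — including the very transitions that make $M$ communicating. Concretely: take $\cS=\{s_1,s_2\}$, one action, $p(s_2|s_1)=\veps/4$, $p(s_1|s_1)=1-\veps/4$, $p(s_1|s_2)=1$, and let $\bar M$ be identical except $\bar p(\cdot|s_1)=\delta_{s_1}$, with $\alpha=\mathrm{id}$. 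Then $\|\bar p(\cdot|s_1)-p(\cdot|s_1)\|_1=\veps/2<\veps$, so $\bar M$ is an $\veps$-approximation of the communicating $M$, yet $s_2$ is unreachable in $\bar M$: $D(\bar M)=\infty$, and $\bar M$ can just as easily be made multichain with state-dependent optimal gain, in which case $\bar h^*$ does not even exist. No coupling or transport argument over length-$D(M)$ trajectories can repair this, because the object you need to control — the bias span of $\bar M$ — is governed by $\bar M$'s own mixing structure, which the approximation hypothesis does not preserve.

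The fix, and what the paper does, is to keep every bias computation on the side of $M$, whose optimal bias $h^*$ has $\mathrm{span}(h^*)\le D(M)$. The paper first lifts $\bar M$ to an MDP $M'$ on $\cS$ via $r'(s,a):=\bar r(\alpha(s),a)$ and $p'(s'|s,a):=p(s'|s,a)\,\bar p(\alpha(s')|\alpha(s),a)/\sum_{s'':\alpha(s'')=\alpha(s')}p(s''|s,a)$; this $M'$ satisfies $\rho^*(M')=\rho^*(\bar M)$ and is an $\veps$-approximation of $M$ over the \emph{same} state space. It then compares $M$ and $M'$ by running $M$'s optimal policy on $M'$ and using the Poisson equation of that policy on $M$ together with an Azuma--Hoeffding argument. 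Your own machinery closes the argument more cleanly once combined with this lift: applying your commutation lemma between $M$ and $M'$ (with $\alpha=\mathrm{id}$) to the single function $h=h^*$, the relation $Th^*=h^*+\rho^*(M)\vone$ gives $T'h^*\lessgtr h^*+\bigl(\rho^*(M)\pm\veps(1+\tfrac12 D(M))\bigr)\vone$, and your dual criterion applied in $M'$ yields both directions of $|\rho^*(M)-\rho^*(M')|\le\veps(1+\tfrac12 D(M))$ without ever invoking the bias of $\bar M$ or $M'$. As written, however, your proof does not establish the theorem.
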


The following is a matching lower bound on the error by aggregation.
This is an improvement over the results in \cite{or-alt07}, which 
only showed that the error approaches 1 when the diameter goes to infinity.

\begin{theorem}\label{thm:lobo}
For each $\veps>0$ and each $2<D<\frac4\veps$
there are MDPs $M$, $\bar M$ such that $\bar M$ is an $\veps$-approximation of $M$, 
$M$ has diameter $D(M)=D$,
and 
\[
	|\rho^*(M) - \rho^*(\bar M)| > \tfrac{1}{56} \veps D(M).
\]
\end{theorem}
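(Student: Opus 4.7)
The plan is to exhibit an explicit two-state, single-action Markov chain $M$ together with a small transition-probability perturbation $\bar M$, and observe that in such a chain an $O(\veps)$ shift of the transition probabilities alters the stationary distribution by an $O(\veps D)$ factor, which translates directly into a comparable change in the average reward. This exploits the fact that the map $\alpha$ in Definition~\ref{def:mdpapprox} need not actually merge any states: the identity is a valid aggregation, so $\bar M$ is allowed to be any MDP with rewards and transitions within $\veps$ of those of $M$.

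Concretely, set $\delta := 1/D$ and take $M$ with $\cS=\{s_0,s_1\}$, a single action $a$, rewards $r(s_0,a)=0$, $r(s_1,a)=1$, and symmetric transitions $p(s_1|s_0,a)=p(s_0|s_1,a)=\delta$. A standard geometric hitting-time argument gives $D(M)=1/\delta=D$, and by symmetry the (unique) stationary distribution is $(\tfrac12,\tfrac12)$, so $\rho^*(M)=\tfrac12$. For $\bar M$ I keep the same state set, actions, and rewards, and break the symmetry of the transitions by setting $\bar p(s_1|s_0,a)=\delta-\veps/4$ and $\bar p(s_0|s_1,a)=\delta+\veps/4$; the hypothesis $D<4/\veps$ gives $\veps/4<\delta$, ensuring all entries lie in $(0,1)$.

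Taking $\alpha:=\mathrm{id}$, Definition~\ref{def:mdpapprox} is then immediate to verify: the rewards agree exactly ($0<\veps$), and each transition row has $\ell_1$-distance $2\cdot\veps/4=\veps/2<\veps$, so $\bar M$ is a bona fide $\veps$-approximation of $M$. A one-line stationary-distribution computation then yields $\bar\pi(s_1)=(\delta-\veps/4)/(2\delta)=\tfrac12-\veps D/8$, whence $\rho^*(\bar M)=\tfrac12-\veps D/8$ and
\[
  |\rho^*(M) - \rho^*(\bar M)| \;=\; \tfrac{\veps D}{8} \;>\; \tfrac{\veps D}{56},
\]
as required (in fact with a better constant than claimed).

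The only real care needed is in honouring the strict inequalities in Definition~\ref{def:mdpapprox} and checking that $\delta\pm\veps/4\in(0,1)$; both reduce immediately to the given range $2<D<4/\veps$ (using $D>2\Rightarrow \delta<\tfrac12$ and $\veps D<4\Rightarrow \veps/4<\delta$). Everything else is a routine stationary-distribution calculation, so the main conceptual step is simply recognising that the approximation latitude allows an asymmetric perturbation of magnitude $\veps/2$ in transitions, which the $1/\delta = D$ amplification of the two-state mixing time turns into an $\Omega(\veps D)$ change in $\rho^*$.
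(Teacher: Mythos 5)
Your construction is correct as a proof of the literal statement: with $\alpha=\mathrm{id}$, Definition~\ref{def:mdpapprox} does permit any $\bar M$ whose rewards agree with those of $M$ and whose transition rows are within $\veps$ of those of $M$ in $\ell_1$; your range checks ($D>2\Rightarrow\delta<\tfrac12$ and $\veps D<4\Rightarrow\veps/4<\delta$) are right, $D(M)=1/\delta=D$ holds for the symmetric two-state chain, and the stationary-distribution computation gives $|\rho^*(M)-\rho^*(\bar M)|=\veps D/8>\veps D/56$. The route is, however, genuinely different from the paper's. The paper starts from a three-state MDP and obtains $\bar M$ by actually merging two states $s_0,s_0'$ whose outgoing transition distributions differ by roughly $\veps/2$; the $\Omega(\veps D)$ gap is then produced by the aggregation itself. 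Your $\bar M$ involves no aggregation at all and instead exploits the slack that Definition~\ref{def:mdpapprox} leaves in the transition probabilities, applying an adversarial asymmetric perturbation to a single unmerged state. This buys a shorter argument and a better constant ($1/8$ versus $1/56$), but it demonstrates slightly less in the context of the paper: the MDPs $\bar M$ that actually arise from state-representation models in Section~\ref{sec:err-agg} have rewards and transitions that are convex combinations of the true ones over aggregated histories, so a perturbation like yours cannot be realized by any model $\phi\in\Phi$, whereas the paper's aggregation-based example is exactly the kind of instance needed to argue that the $\veps D$ per-step loss in Theorem~\ref{thm:mainFinite} is unavoidable when only aggregating models are available (and to support the remark, following the theorem, about running the optimal policy of $\bar M$ on $M$). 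If you wanted your example to carry that extra weight, you would have to realize the perturbation as a genuine state merge, which is essentially what the paper's three-state construction does.
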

\begin{proof}
Consider the MDP $M$ shown in Figure \ref{fig:lobo} (left), where the (deterministic) reward in states $s_0$, $s_0'$ is 0 and 1 in state $s_1$.
\begin{figure}[ht!]
	\centering
		\scalebox{0.1}{\includegraphics{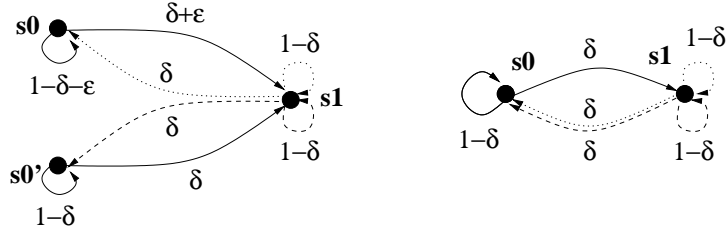}}
	\caption{\label{fig:lobo} The MDPs $M$ (left) and $\bar M$ (right) in the proof of Theorem \ref{thm:lobo}. Solid, dashed, and dotted arrows indicate different actions.}
\end{figure}
We assume that $0<\varepsilon:=\frac{\veps}{2}<\delta:=\frac2D$. 
Then the diameter $D(M)$ is the expected transition time from $s_0'$ to $s_0$ and equal to $\frac{2}{\delta}=D$.
Aggregating states $s_0,s_0'$ gives the MDP $\bar M$ on the right hand side of Figure \ref{fig:lobo}.
Obviously, $\bar M$ is an $\veps$-approximation of $M$.
It is straightforward to check that the stationary distribution $\mu$ (of the only policy) in $M$ is 
$(\mu(s_0),\mu(s_0'),\mu(s_1))=\left(\frac{\delta}{3\varepsilon+4\delta},\frac{\varepsilon+\delta}{3\varepsilon+4\delta},\frac{2\varepsilon+2\delta}{3\varepsilon+4\delta}\right)$,
while the stationary distribution in $\bar M$ is $(\frac12,\frac12)$. Thus, the difference in average reward
is
\[
   \st \;\;\;\;\quad\quad |\rho^*(M) - \rho^*(\bar M)| = \frac{2\varepsilon+2\delta}{3\varepsilon+4\delta} - \frac12 \;=\; \frac{\varepsilon}{2(3\varepsilon+4\delta)}  \;>\;   \frac{\varepsilon}{14\delta}  =  \tfrac{1}{56} \veps D(M).\quad\qquad  \qed
\]
\end{proof}

Theorems~\ref{thm:aggubo} and \ref{thm:lobo} compare the optimal policies of two different MDPs, however it is straightforward to see from the proofs that
the same error bounds hold when comparing on some MDP $M$ the optimal average reward $\rho^*(M)$ to the average reward when 
applying the optimal policy of an $\veps$-approximation $\bar M$ of $M$.
Thus, when we approximate an MDP~$M$ by an $\veps$-approximation~$\bar M$,
the respective error of the optimal policy of $\bar M$ on~$M$ can be of order $\veps D(M)$ as well. 
Hence, we cannot expect to perform below this error if we only have an $\veps$-approximation of the true model at our disposal.

\section{Algorithm}\label{sec:alg}

%
The \AlgoName~algorithm (shown in detail as Algorithm~\ref{fig:algo}) we propose for the setting introduced in Section~\ref{sec:intro} is a generalization of the \texttt{OMS} algorithm of \cite{oms}. Application of the original \texttt{OMS} algorithm to our setting would not work, since \texttt{OMS} compares
the collected rewards of each model to the reward it would receive if the model were Markov. Models
not giving sufficiently high reward are identified as non-Markov and rejected.
In our case, there may be no Markov model in the set of given models~$\Phi$. Thus, the main difference to
\texttt{OMS} is that \AlgoName~for each model estimates and takes into account the possible
approximation error with respect to a closest Markov model.

\AlgoName~proceeds in episodes $k=1,2,\ldots$, each consisting of several runs $j=1,2,\ldots$.
In each run $j$ of some episode $k$, starting at time $t=t_{kj}$, \AlgoName~chooses a policy $\pi_{kj}$ applying the \textit{optimism in face of uncertainty} principle twice.
\smallskip

\textbf{Plausible models.}
First, \AlgoName~considers for each model $\phi\in\Phi$  a set of \textit{plausible} MDPs $\cM_{t,\phi}$
defined to contain all MDPs with state space $\mathcal S_\phi$ and with
rewards $r^+$ and transition probabilities $p^+$ satisfying
\begin{eqnarray}
   \st \big| r^+(s,a) - \what r_t(s,a)\big| &\leq& \geps(\phi) + \sqrt{\tfrac{\log(48 S_\phi A t^3/\delta)}{2N_{t}(s,a)}}\,,\label{eqn:cond_r}\\
 \st \big\|  p^+(\cdot | s,a)-\what p_t(\cdot | s,a)\big\|_1 &\leq& \geps(\phi) + \sqrt{\tfrac{2S_\phi\log(48 S_\phi A t^3/\delta)}{N_{t}(s,a)}}\,,\quad \label{eqn:cond_p}
\end{eqnarray}
where $\geps(\phi)$ is the estimate for the approximation error of model $\phi$ (cf.\ below), $\what p_t(\cdot | s,a)$ and $\what r_t(s,a)$  are respectively the empirical state-transition probabilities and the mean reward at time $t$ for taking action $a$ in state $s\in \mathcal S_\phi$,  $S_\phi:=|\mathcal S_\phi|$ denotes the number of states under model $\phi$, $A:=|\mathcal A|$ is the number of actions, 
and $N_{t}(s,a)$ is the number of times action $a$ has been chosen in state~$s$ up to time~$t$.
(If $a$ hasn't been chosen in $s$ so far, we set $N_{t}(s,a)$ to 1.) 
The inequalities \eqref{eqn:cond_r} and \eqref{eqn:cond_p} are obviously inspired by Chernov bounds that would hold with high probability in case the respective model~$\phi$ is Markov, cf.\ also Lemma~\ref{lem:chernov} below.
\smallskip

\textbf{Optimistic MDP for each model $\phi$.}
In line~4, the algorithm computes for each model $\phi$ a so-called optimistic MDP $M_t^+(\phi)\in \cM_{t,\phi}$ and an 
associated optimal policy $\pi_{t,\phi}^+$ on $M_t^+(\phi)$ such that the average
reward $\rho(M_t^+(\phi),\pi_{t,\phi}^+)$ is maximized.
This can be done by extended value iteration (EVI)~\cite{jaorau}.
Indeed, if $r^+_t(s,a)$ and $p^+_t(s'|s,a)$ denote the optimistic rewards and transition probabilities of $M_t^+(\phi)$, then EVI computes optimistic state values $\v{u}^+_{t,\phi} = (u^+_{t,\phi}(s))_s \in \Real^{S_\phi}$ such that (cf.~\cite{jaorau})
\begin{equation}\label{eqn:empevi}
\what \rho_t^+\!(\phi) := \min_{s\in\cS_\phi} \Big\{ r^+_t\!(s,\pi_{t,\phi}^+(s))
  +\sum_{s'} p^+_t\!(s'|s,\pi_{t,\phi}^+(s))\, u^+_{t,\phi}(s')-u^+_{t,\phi}(s) \Big\}\,
\end{equation}
is an approximation of $\rho^*(M_t^+(\phi))$, that is, 
\begin{equation}\label{eq:rhostar-approx}
  \what \rho_t^+(\phi) \geq \rho^*(M_t^+(\phi)) - 2/\sqrt{t}.
\end{equation}

\textbf{Optimistic model selection.}
In line~5, \AlgoName~chooses a model $\phi_{kj}\in \Phi$ with corresponding MDP $M_{kj} = M_t^+(\phi_{kj})$
and policy $\pi_{kj} := \pi_{t,\phi_{kj}}^+$ that maximizes the average reward
penalized by the term $\pen(\phi,t)$ defined as
\begin{eqnarray}
\pen(\phi,t) &\eqdef&  2^{-j/2} \Big( \big( \span(\v{u}^+_{t,\phi})\,\sqrt{2S_{\phi}}\, + \tfrac{3}{\sqrt{2}}\big) \sqrt{S_{\phi} A \log\big(\tfrac{48 S_{\phi} A t^3}{\delta}\big)} \label{eq:pen}\\
&& \qquad\quad + \span(\v{u}^+_{t,\phi})\sqrt{2\log(\tfrac{24t^2}{\delta})} \Big)   +  2^{-j}\span(\v{u}^+_{t,\phi})  +   \geps(\phi) \big(\span(\v{u}^+_{t,\phi}) + 3 \big), \nonumber    
\end{eqnarray}
where we define $\span(\v{u}^+_{t,\phi}):= \max_{s\in\cS_{\phi}}u^+_{t,\phi}(s)- \min_{s\in\cS_{\phi}} u^+_{t,\phi}(s)$ to be the empirical value span of the optimistic MDP $M_t^+(\phi)$.
Intuitively, the penalization term is an upper bound on the per-step regret of the model $\phi$ in the run to follow in case $\phi$ is chosen, 
cf.\ eq.~\eqref{eq:r1} in the proof of the main theorem.
Similar to the REGAL algorithm of~\cite{regal}
this shall prefer simpler models (i.e., models having smaller state space and smaller value span) to more complex ones.
\smallskip

\textbf{Termination of runs and episodes.}
The chosen policy $\pi_{kj}$ is then executed until either (i) run $j$ reaches the maximal length of $2^j$ steps,
(ii) episode~$k$ terminates when the number of visits in some state has been doubled (line~12), or
(iii) the executed policy $\pi_{kj}$  does not give sufficiently high rewards (line~9).
That is, at any time $t$ in run $j$ of episode $k$ it is checked whether the total reward in the current run is at least $\ell_{kj}\rho_{kj}-\lob_{kj}(t)$,
where $\ell_{kj} \eqdef t-t_{kj}+1$ is the (current) length of run $j$ in episode $k$, and $\lob_{kj}(t)$ is defined as
\begin{eqnarray}
\lob_{kj}(t) &\eqdef& \big(\span_{kj}^{+}\sqrt{2S_{kj}}+\tfrac{3}{\sqrt{2}}\big)\!\!\sum_{s \in \cS_{{kj}}}\sum_{a\in\cA} \sqrt{v_{kj}(s,a)
		\log\!\big(\tfrac{48 S_{{kj}} A t_{kj}^3}{\delta}\big)}\;  \nonumber \\
&& + \,\, \span_{kj}^{+}\sqrt{2\ell_{kj}\log\big(\tfrac{24t_{kj}^2}{\delta}\big)} + \span_{kj}^{+} + \geps\big(\phi_{kj}) \ell_{kj} (\span_{kj}^{+} + 3 \big),\label{eq:lob}
\end{eqnarray}
where $\span_{kj}^+:=\span(\v{u}^+_{t_{kj},\phi_{kj}})$, $S_{kj} := S_{\phi_{kj}}$, and
$v_{kj}(s,a)$ are the (current) state-action counts of run $j$ in episode $k$.
That way, \AlgoName~assumes each model to be Markov, as long as it performs well. 
We will see that $\lob_{kj}(t)$ can be upper bounded by $\ell_{kj}\pen(\phi_{kj},t_{kj})$, cf.\ eq.~\eqref{eq:r1} below. 
\smallskip

\textbf{Guessing the approximation error.}
The algorithm tries to guess for each model $\phi$ the correct approximation error $\veps(\phi)$.
In the beginning the guessed value $\geps(\phi)$ for each model $\phi \in\Phi$ is set to the precision parameter $\veps_0$, the best possible precision we aim for.
Whenever the reward test fails for a particular model $\phi$, it is likely that $\geps(\phi)$ is too small and it is therefore doubled (line~10).

\begin{algorithm}[t!]
\caption{Optimal Approximate Model Selection (\AlgoName)} \label{fig:algo}
\begin{algorithmic}[1]
\INPUT set of models $\Phi$, confidence parameter $\delta\in(0,1)$, precision parameter~$\veps_0\in(0,1)$
\STATE Let $t$ be the current time step, and set $\geps(\phi):=\veps_0$ for all $\phi\in \Phi$.
\FOR{episodes $k=1,2,\ldots$}
  \FOR{runs $j=1,2,\ldots$}
  \STATE $\forall\phi\in\Phi$, use EVI to compute an optimistic MDP $M_t^+(\phi)$ in 
   $\cM_{t,\phi}$ (the set of \textit{plausible} MDPs defined via the confidence intervals \eqref{eqn:cond_p} and \eqref{eqn:cond_r} for the estimates so far), 
   a (near-)optimal policy $\pi_{t,\phi}^+$ on $M_t^+(\phi)$ with approximate average reward~$\what \rho_{t}^+(\phi)$,
   and the empirical value span $\span(\v{u}_{t,\phi}^+)$.
   \STATE Choose model $\phi_{kj}\in  \Phi $ such that
   \begin{equation}\label{eq:modelselection}
       \phi_{kj}=\argmax_{\phi\in\Phi}\big\{\what \rho_{t}^+(\phi)- \pen(\phi,t)\big\}\,.
   \end{equation}
   \STATE Set $t_{kj}:=t$, $\rho_{kj} := \what \rho_{t}^+(\phi_{{kj}})$, $\pi_{kj} := \pi^+_{t,\phi_{kj}}$, 
          and $\cS_{kj}:= \cS_{\phi_{kj}}$.
    \FOR{$2^j$ steps}
       \STATE Choose action $a_t := \pi_{kj}(s_t)$, get reward $r_t$, observe next state $s_{t+1} \in \cS_{kj}$.
       \IF{the total reward collected so far in the current run is less than
       \begin{equation}\label{eq:test}
          \st (t-t_{kj}+1) \rho_{kj} - \lob_{kj}(t),
       \end{equation}}
	    \STATE $\geps({\phi_{kj}}) := 2 \geps({\phi_{kj})}$
	   \STATE Terminate current episode.
       \ELSIF{$\sum_{j'=1}^j v_{kj'}(s_t,a_t)= N_{t_{k1}}(s_t,a_t)$}
           \STATE Terminate current episode.
       \ENDIF
    \ENDFOR
   \ENDFOR
\ENDFOR
\end{algorithmic}
\end{algorithm}

\section{Regret Bounds}\label{sec:regret}
The following upper bound on the regret of \AlgoName\ is the main result of this paper.

\begin{theorem}\label{thm:mainFinite}
There are $c_1,c_2,c_3\in\mathbb{R}$ such that in each learning problem 
where the learner is given access to a set of models $\Phi$ not necessarily containing the true model $\phi^\circ$,
the regret $\Delta(\phi^\circ,T)$ of \AlgoName~(with parameters $\delta$, $\veps_0$) with respect to
the true model~$\phi^\circ$ after any $T\geq SA$ steps is upper bounded by
\begin{eqnarray*}
 c_1\cdot DSA (\log(\tfrac{1}{\veps_0})\log T + \log^2 T) + c_2\cdot D \max\{\veps_0,\veps(\phi)\} T \\ 
     + c_3 \cdot \big(DS_\phi \sqrt{S} A \log^{3/2} (\tfrac{T}{\delta})  + \sqrt{|\Phi|\log(\tfrac{1}{\veps_0})\log T} \big) \sqrt{T}
\end{eqnarray*}
with probability at least $1-\delta$, 
where $\phi\in\Phi$ is an $\veps(\phi)$-approximation of the true underlying Markov model $\phi^\circ$,
$D:=D(\phi^\circ)$, and $S:=\sum_{\phi\in \Phi} S_\phi$.
\end{theorem}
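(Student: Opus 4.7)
The plan is to extend the optimism-in-the-face-of-uncertainty argument of \texttt{OMS}/UCRL2 to handle both approximate (rather than exact) Markov models and the adaptive doubling of the guessed errors $\geps(\phi)$. Define the good event $G$ on which the Hoeffding/Weissman concentration inequalities underlying \eqref{eqn:cond_r}--\eqref{eqn:cond_p} hold simultaneously at every time $t$; a standard union bound together with the Chernov-style lemma alluded to in the algorithm description shows $\P(G)\ge 1-\delta/2$. On $G$, as soon as $\geps(\phi')\ge\veps(\phi')$ for a model $\phi'$, the plausible set $\cM_{t,\phi'}$ contains an MDP that is an $\veps(\phi')$-approximation of the true MDP $M(\phi^\circ)$ in the sense of Definition~\ref{def:mdpapprox}. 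Combined with Theorem~\ref{thm:aggubo} and \eqref{eq:rhostar-approx}, this yields $\what\rho_t^+(\phi')\ge\rho^*(\phi^\circ)-\veps(\phi')(D+1)-2/\sqrt{t}$ and, by the standard argument of \cite{jaorau}, $\span(\v{u}_{t,\phi'}^+)\le D+1$ up to lower-order terms.

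For the per-run analysis, fix a run $(k,j)$ starting at $t_{kj}$ of length $\ell_{kj}\le 2^j$, and split the regret as $\ell_{kj}(\rho^*(\phi^\circ)-\rho_{kj})+\bigl(\ell_{kj}\rho_{kj}-\sum r_t\bigr)$. The first term is bounded via the model-selection rule~\eqref{eq:modelselection}: once $\geps(\phi)$ has been doubled enough to dominate $\veps(\phi)$ (which costs at most $\lceil\log_2(\veps(\phi)/\veps_0)\rceil$ failed runs for $\phi$), we substitute the benchmark approximator $\phi$ of the theorem in the $\argmax$ and obtain $\ell_{kj}\pen(\phi,t_{kj})+O(\ell_{kj}\veps(\phi)D+\sqrt{\ell_{kj}})$. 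The second term is handled by the reward test~\eqref{eq:test}: either the test has not failed (so the collected reward is at least $\ell_{kj}\rho_{kj}-\lob_{kj}$), or it fails, the run terminates, $\geps(\phi_{kj})$ is doubled, and the lost reward is again at most $\lob_{kj}$. Expanding $\lob_{kj}$ via the Poisson equation~\eqref{eq:poisson} applied to the optimistic MDP and using the plausibility bounds on $r^+,p^+$ verifies $\lob_{kj}\le\ell_{kj}\pen(\phi_{kj},t_{kj})$, as claimed in the algorithm description.

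Summing over runs then requires three counting ingredients. The visit-doubling rule yields at most $O(SA\log T)$ visit-triggered episodes, while the adaptive rule adds $O(|\Phi|\log(1/\veps_0))$ test-triggered episodes; each episode contains at most $O(\log T)$ runs of geometrically increasing length $2^j$. Summing the $1/\sqrt{N_t(s,a)}$ part of $\pen$ over the visit-triggered runs and applying Cauchy--Schwarz over the state-action counts $v_{kj}(s,a)$ gives the dominant $c_3\cdot DS_\phi\sqrt{S}A\log^{3/2}(T/\delta)\sqrt{T}$ term. Summing the $\lob_{kj}$ contributions of the $O(|\Phi|\log(1/\veps_0))$ failed runs, with another Cauchy--Schwarz over their lengths $\sum_j 2^j\le 2T$, gives the $c_3\cdot\sqrt{|\Phi|\log(1/\veps_0)\log T}\sqrt{T}$ term. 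The start-of-run constants of $\pen$ and $\lob$, each of order $D$, combine with the $\log T$ runs per episode to produce the additive $c_1\cdot DSA(\log^2 T+\log(1/\veps_0)\log T)$ contributions. Finally, the approximation-error part $\geps(\phi)(\span(\v{u}^+_{t,\phi})+3)$ of $\pen$, summed over $T$ steps and bounded using $\geps(\phi)\le 2\max\{\veps_0,\veps(\phi)\}$ once $\geps(\phi)$ has stabilized, yields the linear $c_2\cdot D\max\{\veps_0,\veps(\phi)\}T$ term.

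The main obstacle is the intertwined bookkeeping for the adaptive $\geps$: one must simultaneously argue that (i) for the benchmark $\phi$ itself, $\geps(\phi)$ is doubled only $O(\log(1/\veps_0))$ times in total, since on $G$, once $\geps(\phi)\ge\veps(\phi)$ the true optimal reward is plausible and the reward test for $\phi$ cannot fail except on the complement of $G$; and (ii) during the initial runs where $\geps(\phi)$ is still too small to serve as a benchmark, the comparison~\eqref{eq:modelselection} against the actually selected $\phi_{kj}$ is nevertheless not ruinous, because $\pen$ is designed precisely to upper bound the per-step regret of any chosen model in the subsequent run, so even ``wrong'' selections pay only their own penalty. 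Balancing these two effects is the crux of the proof and is exactly what the delicate form of $\pen$ and $\lob$ in \eqref{eq:pen} and \eqref{eq:lob} is tuned for.
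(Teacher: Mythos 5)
Your outline follows the paper's proof essentially step for step: the same per-run decomposition into $\ell_{kj}(\rho^*-\rho_{kj})$ plus the tested reward deficit, the same key comparison $\rho_{kj}-\pen(\phi_{kj},t_{kj})\ge\what\rho^+_{t_{kj}}(\phi)-\pen(\phi,t_{kj})$ against a benchmark approximator combined with Theorem~\ref{thm:aggubo}, the same episode count ($SA\log T$ visit-doublings plus $\sum_\phi\log_2(\veps(\phi)/\veps_0)$ test failures), and the same Cauchy--Schwarz summations over $2^{j/2}$. One wiring error worth noting: $\lob_{kj}(t)\le\ell_{kj}\pen(\phi_{kj},t_{kj})$ is just Cauchy--Schwarz on $\sum_{\s,a}\sqrt{v_{kj}(\s,a)}$, whereas the Poisson-equation/martingale expansion you invoke there is what is actually needed to prove that an $\veps$-approximate model passes the reward test once $\geps\ge\veps$ (the paper's Lemma~\ref{lem:error-t}, resting on the fact that the expectations $\E[\what r]$, $\E[\what p]$ form an $\veps$-approximation of $M(\phi^\circ)$) --- a step your sketch otherwise asserts rather than derives.
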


\noindent
As already mentioned, by Theorem~\ref{thm:lobo} the second term in the regret bound is unavoidable when only considering models in $\Phi$.
Note that Theorem~\ref{thm:mainFinite} holds for \textit{all} models $\phi\in\Phi$. For the best possible bound there is a payoff between the size $S_\phi$ of the approximate model and its precision $\veps(\phi)$.

When the learner knows that $\Phi$ contains a Markov model $\phi^\circ$, the original \texttt{OMS} algorithm of~\cite{oms} can be employed. In case when the total number $S=\sum_\phi S_\phi$ of states over all models is large, i.e., $S >D^2 |\Phi| S^\circ$, we can improve on the state space dependence 
of the regret bound given in \cite{oms} as follows.
The proof (found in Appendix~\ref{app:stateimprovement}) is a simple modification of the analysis in~\cite{oms} that exploits that by \eqref{eq:modelselection} the selected models cannot have arbitrarily large state space.

\begin{theorem}\label{thm:stateimprovement}
If $\Phi$ contains a Markov model $\phi^\circ$, with probability at least $1-\delta$ the regret of \texttt{OMS} is bounded by
$\tilde{O}(D^2 {S^\circ}^{3/2} A \sqrt{|\Phi|T} )$.
\end{theorem}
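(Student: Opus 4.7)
The plan is to rerun the regret analysis of~\cite{oms} and modify the one step where the total state count $S=\sum_{\phi\in\Phi} S_\phi$ enters, replacing it by a bound that only involves the state spaces of models actually selected by \AlgoName. Recall from~\cite{oms} that on the standard high-probability event---on which the Markov model $\phi^\circ\in\Phi$ is plausible for all $t$---the regret accumulated in run $j$ of episode $k$ is at most $\ell_{kj}\cdot\pen(\phi_{kj},t_{kj})$, whose dominant contribution (once $\ell_{kj}\le 2^j$ cancels the $2^{-j/2}$ prefactor in~\eqref{eq:pen}) has the form $\span_{kj}^+ S_{kj}\sqrt{A\ell_{kj}\log(T/\delta)}$. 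Summing over runs and applying Cauchy--Schwarz is where $\sqrt{S}$ appears, and this is the single calculation I would modify.

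Next, I would derive a uniform bound on $\span_{kj}^+ S_{kj}$ by exploiting the optimistic selection rule. Since $\phi^\circ\in\Phi$ is Markov, Lemma~\ref{lem:chernov} yields, on the good event, $\what\rho_t^+(\phi^\circ)\ge \rho^*(\phi^\circ)-2/\sqrt t$ and $\span(\v u^+_{t,\phi^\circ})=O(D)$; moreover $\geps(\phi^\circ)$ never leaves its initial value $\veps_0$ because the reward test in line~9 of Algorithm~\ref{fig:algo} cannot fail for a Markov model on that event. Combining this with $\what\rho_t^+(\phi_{kj})\le 1$ and the selection rule~\eqref{eq:modelselection} gives the pivotal inequality $\pen(\phi_{kj},t_{kj})\le 1+\pen(\phi^\circ,t_{kj})$, and reading off the $S_\phi$-dependent leading terms of~\eqref{eq:pen} on both sides translates it into
\[
  2^{-j/2}\,\span_{kj}^+\, S_{kj}\sqrt{A\log(T/\delta)}\ \lesssim\ 1 + 2^{-j/2}\,D\,S^\circ\sqrt{A\log(T/\delta)}.
\]
Splitting runs by the scale $j$, either the right-hand side forces $\span_{kj}^+ S_{kj}\le \tilde{O}(DS^\circ)$, or $j$ is so large that only $\tilde{O}(|\Phi|)$ such runs exist and their aggregate contribution is absorbed into lower-order terms.

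Finally, I would substitute this bound into the Cauchy--Schwarz step of~\cite{oms}. The relevant sum becomes
\[
 \sum_{kj}\span_{kj}^+\, S_{kj}\sqrt{A\ell_{kj}\log(T/\delta)}\ \le\ \tilde{O}(DS^\circ)\cdot\sqrt{A\cdot(\text{number of runs})\cdot T},
\]
and since the number of runs contributing significantly is $\tilde{O}(|\Phi|)$, together with the remaining $D\sqrt{S^\circ A}$ factors picked up from the $\span\le D$ and $\sqrt{S_\phi}$ contributions already present in~\eqref{eqn:cond_p}--\eqref{eq:pen}, this yields the claimed $\tilde{O}(D^2{S^\circ}^{3/2}A\sqrt{|\Phi|T})$ bound. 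The main obstacle will be the second step: the penalty inequality does not bound $S_{kj}$ uniformly in $j$, so runs must be carefully partitioned into those where the constraint is tight and those that are few but long, the latter handled by a direct $\ell_{kj}$ bound; once this regime split is set up cleanly, the rest is a bookkeeping modification of the analysis in~\cite{oms}.
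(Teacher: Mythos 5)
Your overall strategy is the same as the paper's: use the selection rule \eqref{eq:modelselection} together with $\what\rho^+_{t}(\phi^\circ)\ge\rho^*-2/\sqrt t$ and $\rho_{kj}\le 1$ to get $\pen(\phi_{kj},t_{kj})\le\pen(\phi^\circ,t_{kj})+1$, deduce that selected models cannot have large state spaces, and then replace $S=\sum_\phi S_\phi$ by $|\Phi|$ times that per-model bound in the \texttt{OMS} analysis. The paper's own proof is exactly this, with the conclusion that \emph{every} selected model satisfies $S_{kj}=\tilde O(D^2S^\circ)$, so that $S$ may be replaced by $\tilde O(|\Phi|D^2S^\circ)$ throughout.

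The gap is in your treatment of the large-$j$ regime, and it is not cosmetic. First, the count is wrong: the number of runs with $2^{j/2}\gtrsim DS^\circ\sqrt{A\log(\cdot)}$ is not $\tilde O(|\Phi|)$; since run $j$ of an episode can only start after $2^j-2$ earlier steps of that episode, the number of such runs is controlled by $T/2^{j_0}$ times the number of episodes, which has nothing to do with $|\Phi|$. Second, and more importantly, the place where $S$ enters the \texttt{OMS} bound is the episode count $K_T\le SA\log_2(2T/(SA))$, which feeds into $\sum_{k,j}2^{j/2}\le\sqrt{2K_T\log_2(2T/K_T)(T+K_T)}$. That count is driven by $\sum_{\phi\ \mathrm{selected}}S_\phi$, so it is not enough that ``most'' runs select small models: if your large-$j$ regime permits even a few selections of a model with $S_\phi$ of order $2^{j/2}/\sqrt{A}$ (up to $\sqrt T$), those models' state--action pairs enter the doubling condition of line~12 and inflate $K_T$, hence $\sum 2^{j/2}$, beyond the $\sqrt T$ rate. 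A ``direct $\ell_{kj}$ bound'' on those runs does not repair this, because the damage is to the episode count, not to the per-run regret. What is needed (and what the paper asserts) is a bound on $S_{kj}$ that holds uniformly over all selected runs; this is also why the final bound carries $D^2S^\circ$ rather than $DS^\circ$ inside the square root. Relatedly, your closing arithmetic does not produce the stated bound: with ``$\tilde O(|\Phi|)$ runs'' your displayed sum would give $DS^\circ\sqrt{A|\Phi|T}$, and the extra factor $D\sqrt{S^\circ A}$ you append is not accounted for; the correct bookkeeping uses the number of runs $\tilde O(K_T\log T)=\tilde O(|\Phi|D^2S^\circ A\log^2 T)$, which is exactly what yields $\tilde O(D^2{S^\circ}^{3/2}A\sqrt{|\Phi|T})$.
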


\textbf{Discussion.}
Unfortunately, while the bound in Theorem~\ref{thm:mainFinite} is optimal with respect to the dependence
on the horizon $T$, the improvement on the state space dependence that we could achieve in Theorem~\ref{thm:stateimprovement}
for \texttt{OMS} is not as straightforward for \AlgoName\ and remains an open question just as the optimality of the bound with respect to the other appearing parameters. We note that this is still an open question even for learning in MDPs (without additionally selecting the state representation) as well, cf.~\cite{jaorau}.

Another direction for future work is the extension to the case when the underlying true MDP has continuous state space.
In this setting, the models have the natural interpretation of being discretizations of the original state space.
This could also give improvements over current regret bounds for continuous reinforcement learning as given in~\cite{uccrl}.
Of course, the most challenging goal remains to generate suitable state representation models algorithmically instead of assuming them to be given, cf.~\cite{frl}.
However, at the moment it is not even clear how to deal with the case when an infinite set of models is given.

\section{Proof of Theorem \ref{thm:mainFinite}}\label{sec:proof}

The proof is divided into three parts and follows the lines of~\cite{oms}, now taking into account the necessary modifications to deal with the approximation error. 
First, in Section~\ref{sec:err-agg} we deal with the 
error of $\veps$-approximations. Then in Section~\ref{sub:test}, we show that all state-representation models $\phi$
which are an $\veps(\phi)$-approximation of a Markov model pass the test in \eqref{eq:test}
on the rewards collected so far with high probability, provided that the estimate $\geps(\phi)\geq \veps(\phi)$.
Finally, in Section~\ref{sub:regret} we use this result to derive the regret bound of Theorem~\ref{thm:mainFinite}.

\subsection{Error Bounds for $\veps$-Approximate Models}\label{sec:err-agg}

We start with some observations about the empirical rewards and transition probabilities our algorithm calculates and employs for each model $\phi$.
While the estimated rewards $\what r$ and transition probabilities $\what p$ used by the algorithm do in general not correspond to some underlying
true values, the expectation values of $\what r$ and $\what p$ are still well-defined, given the history $h\in\cH$ so far.
Indeed, consider some $h\in\cH$ with $\phi(h)=\s\in \cS_\phi$,  $\phit(h)=s\in \cS^\circ$,  and an action $a$, and assume that the estimates $\what r(\s,a)$ and $\what p(\cdot|\s,a)$ are 
calculated from samples when action~$a$ was chosen after histories $h_1, h_2,\ldots, h_n \in \cH$ that are mapped to the same state~$\s$ by $\phi$. (In the following, we will denote the states of an approximation~$\phi$
by variables with dot, such as $\s$, $\s'$, etc., and states in the state space $\cS^\circ$ of the true Markov model~$\phi^\circ$ without a dot, such as $s$, $s'$, etc.)
Since rewards and transition probabilities are well-defined under $\phi^\circ$, we have
\begin{eqnarray}
      \E[\what r(\s,a)] &=& \tfrac1n\sum_{i=1}^n r(\phi^\circ(h_i),a),  \mbox{ and } 
      \E[\what p(\s'|\s,a)] =
             \tfrac1n\sum_{i=1}^n \sum_{h':\phi(h')=\s'} \!\!\!\! p(h'|h_i,a). \qquad \label{eq:what-p}\label{eq:what-r} 
\end{eqnarray}

Since $\phi$ maps the histories $h,h_1,\ldots,h_n$ to the same state $\s\in\cS_\phi$, 
the rewards and transition probabilities in the states $\phi^\circ(h),\phi^\circ(h_1),\ldots,\phi^\circ(h_n)$ of the true underlying MDP 
are $\veps$-close, cf.\ \eqref{eq:agg-rd1}.
It follows that for $s=\phi^\circ(h)$ and $\s=\phi(h)$
\begin{eqnarray}
      \Big| \E[\what r(\s,a)] - r(s,a) \Big|  
		    =   \Big|  \tfrac1n\sum_{i=1}^n \big(r(\phi^\circ(h_i),a) - r(\phi^\circ(h),a)\big) \Big|  < \veps(\phi). \label{eq:model-r}
\end{eqnarray}
For the transition probabilities we have by \eqref{eq:agg-pd2} for $i=1,\ldots,n$
\begin{equation}\label{eq:bias3a}
      \sum_{\s' \in \cS_\phi} \Big| p^ \agg(\s' | h_i,a) -  \sum_{s'\in \cS^\circ:\alpha(s')=\s'} p^\agg(s' | h_i,a) \Big|  < \tfrac{\veps(\phi)}{2}. 
\end{equation}
Further, all $h_i$ as well as $h$ are mapped to $\s$ by $\phi$ so that according to \eqref{eq:agg-pd1} and recalling that $s=\phi^\circ(h)$ we have for $i=1,\ldots,n$
\begin{eqnarray}
\sum_{\s'\in\cS_\phi}  \Big| \sum_{s'\in \cS^\circ:\alpha(s')=\s'} p^\agg(s'| h_i,a) - \sum_{s'\in \cS^\circ:\alpha(s')=\s'} p(s'| s,a) \Big|  \nonumber \\
  \leq  \sum_{s'\in\cS^\circ} \big| p^\agg(s'| h_i,a) -  p(s'| s,a)\big|  < \tfrac{\veps(\phi)}{2}. \label{eq:bias3d} 
\end{eqnarray}
By \eqref{eq:bias3a} and \eqref{eq:bias3d} for  $i=1,\ldots,n$
\begin{equation}\label{eq:bias3e}
   \sum_{\s'\in\cS_\phi}  \Big| p^\agg(\s'| h_i,a) - \sum_{s'\in \cS^\circ:\alpha(s')=\s'} p(s'| s,a) \Big|  < \veps(\phi), 
\end{equation}
so that from \eqref{eq:what-p} and \eqref{eq:bias3e} we can finally bound 
\begin{eqnarray}
     \lefteqn{  \sum_{\s' \in \cS_\phi} \Big| \E[ \what p(\s' | \s,a)] -  \sum_{s'\in \cS^\circ:\alpha(s')=\s'} p(s' | s,a) \Big|  }  \nonumber \\
		    &\leq&   \tfrac1n\sum_{i=1}^n  \sum_{\s' \in \cS_\phi} \Big| p^\agg(\s'|h_i,a) -  \sum_{s'\in \cS^\circ:\alpha(s')=\s'} p(s' | s,a)  \Big|  \,<\, \veps(\phi). \label{eq:model-p} 
\end{eqnarray}
Thus, according to \eqref{eq:model-r} and \eqref{eq:model-p} the $\veps$-approximate model $\phi$ gives rise to an MDP $\bar{M}$ on $\cS_{\phi}$ with rewards $\bar{r}(\s,a):=\E[\what r(\s,a)]$ and transition probabilities $\bar{p}(\s' | \s,a):=\E[ \what p(\s' | \s,a)]$ that is an $\veps$-approximation of the true MDP $M(\phi^\circ)$. Note that  $\bar{M}$ actually depends on the history so far.

The following lemma gives some basic confidence intervals for the estimated rewards and transition probabilities.
For a proof sketch see Appendix~\ref{app:proof-chernov}.

\begin{lemma}\label{lem:chernov} 
Let $t$ be an arbitrary time step and $\phi \in \Phi$ be the model employed at step~$t$.
Then the estimated rewards $\what{r}$ and transition probabilities $\what{p}$ satisfy for all $\s,\s'\in\cS_\phi$ and all $a\in\cA$
\begin{eqnarray*}
 \st \what{r}(\s,a) - \E[\what{r}(\s,a)] &\leq& \sqrt{\tfrac{\log(48 S_\phi A t^3/\delta)}{N_t(\s,a)}}, \\
 \st \Big\|\what{p}(\cdot|\s,a) - \E[\what{p}(\cdot|\s,a)]\Big\|_1  &\leq& \sqrt{\tfrac{2 S_\phi \log(48 S_\phi A t^3 /\delta)}{N_t(\s,a)}} ,
\end{eqnarray*}
each with probability at least $1-\frac{\delta}{24t^2}$.
\end{lemma}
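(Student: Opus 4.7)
The plan is to cast Lemma~\ref{lem:chernov} as two standard concentration arguments after reducing to bounded martingale differences, then close out with a union bound over states, actions, and possible sample counts. The identities \eqref{eq:what-r}--\eqref{eq:what-p} established just above already isolate the per-step conditional means of each individual reward and each next-state indicator observed in state $\s$ under action $a$, which is the only input needed to build the martingales.

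Fix $\phi\in\Phi$, $\s\in\cS_\phi$, $a\in\cA$ and an integer $n\in\{1,\ldots,t\}$. Let $\tau_1<\tau_2<\cdots<\tau_n$ be the first $n$ times at which action $a$ is chosen while $\phi$ assigns state $\s$, and denote the corresponding histories by $h_{\tau_1},\ldots,h_{\tau_n}$. By \eqref{eq:what-r}--\eqref{eq:what-p}, the sequences
\[
   X_i \eqdef r_{\tau_i} - r\bigl(\phit(h_{\tau_i}),a\bigr),\qquad Y_i^{\s'} \eqdef \vone\{\phi(h_{\tau_i+1})=\s'\} - p^\agg(\s'\,|\,h_{\tau_i},a)
\]
are bounded martingale difference sequences with respect to the natural filtration generated by the interaction up to time $\tau_i$. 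Applying Azuma--Hoeffding to $\tfrac{1}{n}\sum_i X_i$ (increments in $[-1,1]$) gives the one-sided bound $\P\bigl(\what r(\s,a)-\E[\what r(\s,a)]>\eta\bigr)\leq \exp(-2n\eta^2)$, and solving for $\eta$ at failure level $\delta/(24 S_\phi A t^3)$ (the slight slack being absorbed inside the logarithm) produces the claimed $\sqrt{\log(48 S_\phi A t^3/\delta)/n}$ deviation.

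For the transition probabilities, I would combine the coordinate martingales $\{Y_i^{\s'}\}_{\s'\in\cS_\phi}$ into an $L_1$ concentration bound of Weissman--McDiarmid type, yielding $\P\bigl(\|\what p(\cdot|\s,a)-\E[\what p(\cdot|\s,a)]\|_1>\eta\bigr)\leq 2^{S_\phi}\exp(-n\eta^2/2)$. Solving for $\eta$ at the same target failure level $\delta/(24 S_\phi A t^3)$, and absorbing the $S_\phi\log 2$ contribution inside the logarithm, yields $\sqrt{2S_\phi\log(48 S_\phi A t^3/\delta)/n}$. A final union bound over the $S_\phi\cdot A\cdot t$ triples $(\s,a,n)$ aggregates to total failure probability at most $\delta/(24t^2)$ for each of the two statements.

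The main subtlety to pay attention to is that the samples used to form $\what r(\s,a)$ and $\what p(\cdot|\s,a)$ are \emph{not} i.i.d.: because $\phi$ aggregates histories, the conditional distribution of each observed reward or next observation depends on which underlying true state $\phit(h_{\tau_i})$ was actually visited and on the way \AlgoName\ interleaves runs and episodes. This is exactly why the estimator error must be expressed as a martingale deviation around the history-dependent mean identified in \eqref{eq:what-r}--\eqref{eq:what-p}, rather than as a sum of i.i.d.\ variables, and why the union bound must additionally range over the random visit count $N_t(\s,a)\in\{1,\ldots,t\}$ rather than a single fixed sample size.
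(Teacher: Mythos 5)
Your proposal is correct and follows essentially the same route as the paper: the paper simply cites the per-$n$ concentration inequalities from Appendix~C.1 of \cite{jaorau} (Hoeffding for the rewards, a Weissman-type $L_1$ bound for the transitions) and then takes a union bound over all $\s$, $a$, and the possible values $N_t(\s,a)=1,\ldots,t$, exactly as you do. Your explicit martingale formulation of the non-i.i.d.\ samples via the conditional means in \eqref{eq:what-r}--\eqref{eq:what-p} is a sound (and arguably more careful) way of justifying the step the paper delegates to the citation, and your constants land within the slack the lemma allows.
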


The following is a consequence of Theorem~\ref{thm:aggubo}, see Appendix~\ref{app:lem3} for a detailed proof.
\begin{lemma}\label{lem:3}
Let $\phi^\circ$ be the underlying true Markov model leading to MDP $M=(\cS^\circ,\cA,r,p)$,
and $\phi$ be an $\veps$-approximation of $\phi^\circ$.
Assume that the confidence intervals given in Lemma~\ref{lem:chernov} hold at step $t$ for all states $\s,\s'\in\cS_\phi$ and all actions~$a$.
Then the optimistic average reward $\what\rho^+_{t}(\phi)$ defined in \eqref{eqn:empevi} satisfies
\[ 
    \what\rho^+_{t}(\phi)  \;\geq\;  \rho^*(M) - \veps(D(M)+1)  -  \tfrac{2}{\sqrt{t}}.
\]
\end{lemma}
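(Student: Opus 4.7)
The plan is to reduce the statement to Theorem~\ref{thm:aggubo} combined with the optimism of $M_t^+(\phi)$ over the plausible set $\cM_{t,\phi}$. Let $\bar{M}$ be the MDP on the state space $\cS_\phi$ with rewards $\bar{r}(\s,a) := \E[\what{r}(\s,a)]$ and transitions $\bar{p}(\s'|\s,a) := \E[\what{p}(\s'|\s,a)]$, where the expectations are conditional on the history up to step~$t$. The computation already carried out in Section~\ref{sec:err-agg}---specifically inequalities \eqref{eq:model-r} and \eqref{eq:model-p}---shows that $\bar{M}$ is an $\veps(\phi)$-approximation of the true MDP $M = M(\phi^\circ)$ in the sense of Definition~\ref{def:mdpapprox}, with aggregation map~$\alpha$ induced by $\phi$.

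The key step is to show that $\bar{M} \in \cM_{t,\phi}$. Under the assumed confidence intervals of Lemma~\ref{lem:chernov}, for every $\s\in\cS_\phi$ and $a\in\cA$ the centered deviations $|\bar{r}(\s,a) - \what{r}_t(\s,a)|$ and $\|\bar{p}(\cdot|\s,a) - \what{p}_t(\cdot|\s,a)\|_1$ are bounded by the Hoeffding/Weissman-type terms that appear on the right-hand sides of \eqref{eqn:cond_r} and \eqref{eqn:cond_p}; since $\geps(\phi)\geq 0$, the inequalities defining $\cM_{t,\phi}$ are a fortiori satisfied for $\bar{r}$ and $\bar{p}$, i.e.\ $\bar{M}\in\cM_{t,\phi}$. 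Because $M_t^+(\phi)$ is selected to maximize the average reward inside $\cM_{t,\phi}$, this yields $\rho^*(M_t^+(\phi)) \geq \rho^*(\bar{M})$.

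Finally, I would chain three inequalities. First, Theorem~\ref{thm:aggubo} applied to the $\veps(\phi)$-approximation $\bar{M}$ of $M$ gives $\rho^*(\bar{M}) \geq \rho^*(M) - \veps(\phi)(D(M)+1)$. Second, the optimism just established gives $\rho^*(M_t^+(\phi))\geq \rho^*(\bar{M})$. Third, the EVI guarantee \eqref{eq:rhostar-approx} gives $\what\rho^+_t(\phi)\geq \rho^*(M_t^+(\phi)) - 2/\sqrt{t}$. Combining these three bounds yields $\what\rho^+_t(\phi) \geq \rho^*(M) - \veps(\phi)(D(M)+1) - 2/\sqrt{t}$, which is exactly the conclusion.

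I expect the main obstacle to be the interpretation of Lemma~\ref{lem:chernov} in this context: the empirical estimates $\what{r}$, $\what{p}$ aggregate samples collected after histories $h_1,\dots,h_n$ that map to possibly distinct true states $\phi^\circ(h_i)\in\cS^\circ$, so one must read the lemma as a concentration of $\what{r}$, $\what{p}$ around their conditional expectations $\bar{r}$, $\bar{p}$ (not around parameters of any fixed MDP). This is precisely the content the appendix proof of Lemma~\ref{lem:chernov} establishes via a martingale Hoeffding/Weissman argument, and once granted, the containment $\bar{M}\in\cM_{t,\phi}$, and hence the entire chain above, is immediate.
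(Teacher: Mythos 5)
Your proposal is correct and follows essentially the same route as the paper's own proof in Appendix~\ref{app:lem3}: define $\bar{M}$ from the conditional expectations $\E[\what r]$, $\E[\what p]$, invoke \eqref{eq:model-r} and \eqref{eq:model-p} to see it is an $\veps$-approximation of $M$, use Lemma~\ref{lem:chernov} to place $\bar{M}$ in $\cM_{t,\phi}$, and chain Theorem~\ref{thm:aggubo} with the EVI guarantee \eqref{eq:rhostar-approx}. Your closing remark about what Lemma~\ref{lem:chernov} actually concentrates around is exactly the right reading and matches the paper's intent.
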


\subsection{Approximate Markov models pass the test in \eqref{eq:test}}\label{sub:test}

Assume that the model $\phi_{kj} \in \Phi$ employed in run~$j$ of episode~$k$ is an $\veps_{kj}:=\veps(\phi_{kj})$-approximation of the true Markov model.
We are going to show that $\phi_{kj}$ will pass the test \eqref{eq:test} on the collected rewards with high probability at any step~$t$,
provided that $\geps_{kj}:=\geps(\phi_{kj}) \geq\veps_{kj}$.

\begin{lemma}\label{lem:error-t}
For each step $t$ in some run~$j$ of some episode~$k$, given that $t_{kj}=t'$ the chosen model $\phi_{kj}$ passes the test in \eqref{eq:test} at step $t$ with probability at least 
$1-\frac{\delta}{6t'^2}$ whenever $\geps_{kj}(\phi_{kj})\geq \veps(\phi_{kj})$.
\end{lemma}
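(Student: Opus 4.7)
The plan is to start from the extended value iteration inequality \eqref{eqn:empevi}, which guarantees $\rho_{kj} \le r^+_{t'}(\s,\pi_{kj}(\s)) + \sum_{\s'} p^+_{t'}(\s'|\s,\pi_{kj}(\s))\,u^+(\s') - u^+(\s)$ for every $\s \in \cS_{kj}$, where $u^+ := \v u^+_{t',\phi_{kj}}$. Instantiating at $\s = \s_\tau := \phi_{kj}(h_\tau)$, summing $\tau = t',\ldots,t$, and inserting $\pm r(s_\tau,a_\tau)$ (with $s_\tau := \phit(h_\tau)$) and $\pm u^+(\s_{\tau+1})$, the gap $\ell_{kj}\rho_{kj} - \sum_\tau r_\tau$ splits into five pieces: (I) $\sum_\tau[r^+_{t'}(\s_\tau,a_\tau) - r(s_\tau,a_\tau)]$; (II) $\sum_\tau[r(s_\tau,a_\tau) - r_\tau]$; (III) $\sum_\tau\sum_{\s'}[p^+_{t'}(\s'|\s_\tau,a_\tau) - p^\agg(\s'|h_\tau,a_\tau)]\,u^+(\s')$; (IV) $\sum_\tau\bigl[\sum_{\s'} p^\agg(\s'|h_\tau,a_\tau)\,u^+(\s') - u^+(\s_{\tau+1})\bigr]$; and (V) $u^+(\s_{t+1}) - u^+(\s_{t'})$. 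Piece (V) telescopes and is bounded by $\span(u^+)=\span_{kj}^+$, accounting for the isolated $\span_{kj}^+$ summand of $\lob_{kj}(t)$.

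Pieces (I) and (III) are bounded pointwise by triangulating through $\what r_{t'},\E[\what r]$ and $\what p_{t'},\E[\what p]$, combining the plausibility bounds \eqref{eqn:cond_r}, \eqref{eqn:cond_p}, Lemma~\ref{lem:chernov}, and the approximation bounds \eqref{eq:model-r}, \eqref{eq:model-p}; for (III) the last leg uses \eqref{eq:bias3e} to pass from $\E[\what p(\cdot|\s_\tau,a_\tau)]$ to $p^\agg(\cdot|h_\tau,a_\tau)$ through the aggregated true kernel, and the inner product against $u^+$ costs at most $\span_{kj}^+/2$ times the $\ell_1$-distance. Summing over $\tau$ and using $\sum_\tau 1/\sqrt{N_{t'}(\s_\tau,a_\tau)} \le \sum_{\s,a}\sqrt{v_{kj}(\s,a)}$ (valid since $v_{kj}(\s,a) \le N_{t'}(\s,a)$ throughout a run) collapses the stochastic parts of (I) and (III) into a term of the form $\bigl(\span_{kj}^+\sqrt{2S_{kj}}+O(1)\bigr)\sum_{\s,a}\sqrt{v_{kj}(\s,a)\log(48 S_{kj}At'^3/\delta)}$, while all $\veps(\phi_{kj}) \le \geps(\phi_{kj})$-contributions (from both plausibility and approximation) aggregate into $\geps(\phi_{kj})\,\ell_{kj}(\span_{kj}^+ + O(1))$, which is exactly the approximation bucket of $\lob_{kj}(t)$.

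Piece (IV) is a martingale whose increments are conditionally centered given $(h_\tau,a_\tau)$ and of magnitude at most $\span_{kj}^+$, so Azuma--Hoeffding at confidence $\delta/(24t'^2)$ yields $|(IV)| \le \span_{kj}^+\sqrt{2\ell_{kj}\log(24t'^2/\delta)}$, exactly the second summand of $\lob_{kj}(t)$. Piece (II) is controlled by Hoeffding on each $(\s,a)$-restricted subsum, contributing $(1/\sqrt{2})\sum_{\s,a}\sqrt{v_{kj}(\s,a)\log(\ldots)}$, which, together with two analogous $(1/\sqrt{2})$-terms coming from the triangle decomposition of (I), accounts for the constant $3/\sqrt{2}$ in the first summand of $\lob_{kj}(t)$. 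A union bound over the two events of Lemma~\ref{lem:chernov} and the two Azuma--Hoeffding applications yields total failure probability at most $4\cdot\delta/(24t'^2) = \delta/(6t'^2)$, matching the claim.

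The main obstacle is the treatment of the approximation error. Unlike the Markov setting of \cite{oms}, here $\what r_{t'}, \what p_{t'}$ do not estimate a single underlying transition kernel but rather history-dependent averages. Controlling $\|\E[\what p(\cdot|\s_\tau,a_\tau)] - p^\agg(\cdot|h_\tau,a_\tau)\|_1$ requires the full triangulation of Section~\ref{sec:err-agg} (culminating in \eqref{eq:model-p}) through the aggregated kernel $\sum_{s':\alpha(s')=\cdot} p(s'|s_\tau,a_\tau)$, and the subsequent $\span_{kj}^+/2$ factor from the inner product is what forces a $\span_{kj}^+$ coefficient inside the approximation bucket; making all of the $\geps$-contributions fit inside $\geps(\phi_{kj})\,\ell_{kj}(\span_{kj}^+ + 3)$ while simultaneously respecting the tight failure budget $\delta/(6t'^2)$ is the most delicate bookkeeping step of the argument.
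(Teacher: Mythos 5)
Your overall architecture matches the paper's: start from the extended value iteration guarantee \eqref{eqn:empevi}, split the gap $\ell_{kj}\rho_{kj}-\sum_\tau r_\tau$ into a reward part and a transition/bias part, triangulate each through the empirical quantities at $t'$ and their (history-dependent) expectations using \eqref{eqn:cond_r}, \eqref{eqn:cond_p}, Lemma~\ref{lem:chernov}, \eqref{eq:model-r}, \eqref{eq:model-p}, control the remaining stochastic term by Azuma--Hoeffding, and close with a union bound giving $4\cdot\frac{\delta}{24t'^2}=\frac{\delta}{6t'^2}$. The telescoping term (V), the Cauchy--Schwarz-free collapse via $v_{kj}(\s,a)\le N_{t'}(\s,a)$, and the probability budget are all handled as in the paper.

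There is, however, one concrete accounting gap. You center the martingale (IV) at the aggregated history kernel $p^\agg(\cdot\,|\,h_\tau,a_\tau)$ on $\cS_{kj}$, which forces piece (III) to bridge from $\E[\what p_{t'}(\cdot|\s_\tau,a_\tau)]$ all the way to $p^\agg(\cdot|h_\tau,a_\tau)$. As you note, this requires two $\veps$-legs --- \eqref{eq:model-p} to reach $\sum_{s':\alpha(s')=\s'}p(s'|s_\tau,a)$ and then \eqref{eq:bias3e} to reach $p^\agg(\cdot|h_\tau,a_\tau)$ --- costing $2\veps_{kj}\cdot\tfrac{\span_{kj}^+}{2}=\veps_{kj}\span_{kj}^+$ per step on top of the $\tfrac{\geps_{kj}\span_{kj}^+}{2}$ from \eqref{eqn:cond_p}. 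Your transition bucket is therefore $\tfrac32\geps_{kj}\ell_{kj}\span_{kj}^+$, and even with the $\geps_{kj}\ell_{kj}$ you save in the reward bucket (your total there is $2\geps_{kj}\ell_{kj}$ rather than the paper's $3\geps_{kj}\ell_{kj}$), the grand total $\tfrac32\geps_{kj}\ell_{kj}\span_{kj}^+ + 2\geps_{kj}\ell_{kj}$ exceeds the test's allowance $\geps_{kj}\ell_{kj}(\span_{kj}^+ + 3)$ whenever $\span_{kj}^+>2$. Since the lemma asserts that the \emph{specific} test \eqref{eq:test} with the \emph{specific} threshold \eqref{eq:lob} is passed, this does not go through as written. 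The fix is the paper's move in \eqref{eq:bias4}--\eqref{eq:bias4a}: stop the triangulation at the pushed-forward true kernel $\sum_{s':\alpha(s')=\s'}p(s'|s_\tau,a)$ (one $\veps$-leg, via \eqref{eq:model-p} only) and take the martingale over the \emph{true} state sequence with $w'(s):=w_{kj}(\alpha(s))$, using that $s_{\tau+1}=\phi^\circ(h_{\tau+1})$ has conditional law $p(\cdot|s_\tau,a_\tau)$; this absorbs the residual history-dependence into the martingale at no extra $\veps$ cost and restores the $\geps_{kj}\ell_{kj}(\span_{kj}^++3)$ budget.
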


\begin{proof}
In the following, $\s_\tau:=\phi_{kj}(h_\tau)$ and $s_\tau:=\phi^\circ(h_\tau)$ are the states at time step $\tau$ under model $\phi_{kj}$ and the true Markov model $\phi^\circ$, respectively. 
\smallskip

\textbf{Initial decomposition.}
First note that at time $t$ when the test is performed, we have $\sum_{\s \in \cS_{{kj}}}\sum_{a\in\cA} v_{kj}(\s,a) = \ell_{kj} = t-t'+1$, so that
 \begin{eqnarray}\label{eq:decomp}
    \ell_{kj}\rho_{kj} - \sum_{\tau=t'}^t r_\tau 
    = 
    \sum_{\s \in \cS_{{kj}}}\sum_{a\in\cA} v_{kj}(\s,a)\Big(\rho_{kj} - \what{r}_{t':t}(\s,a)\Big)\,,\nonumber
 \end{eqnarray}
where $\what{r}_{t':t}(\s,a)$ is the empirical average reward collected for choosing $a$ in $\s$
from time $t'$ to the current time~$t$ in run $j$ of episode $k$.

Let $r_{kj}^+(\s,a)$ be the rewards and  $p_{kj}^+(\cdot|\s,a)$ the transition probabilities 
of the optimistic model $M_{t_{kj}}^+(\phi_{kj})$. 
Noting that $v_{kj}(\s,a)=0$ when $a\neq \pi_{kj}(\s)$, we get 
\begin{eqnarray} 
\ell_{kj}\rho_{kj} - { \sum_{\tau=t'}^t r_\tau}
 &=&  \sum_{\s,a}v_{kj}(\s,a)\big(\what \rho^+_{kj}(\phi_{kj})-r_{kj}^+(\s,a)\big)  \label{eqn:firstdecompo1}\\
&&+\sum_{\s,a}v_{kj}(\s,a)\big(r_{kj}^+(\s,a)-\what{r}_{t':t}(\s,a)\big).            \label{eqn:firstdecompo}
\end{eqnarray}
We continue bounding the two terms \eqref{eqn:firstdecompo1} and \eqref{eqn:firstdecompo} separately.
\smallskip

\textbf{Bounding the reward term \eqref{eqn:firstdecompo}.}
Recall that $r(s,a)$ is the mean reward for choosing $a$ in $s$ in the true Markov model $\phi^\circ$. 
Then we have at each time step $\tau=t',\ldots,t$ 
\begin{eqnarray}
 \lefteqn{ r^+_{kj}(\s_\tau,a) -  \what{r}_{t':t}(\s_\tau,a)  = \big(r^+_{kj}(\s_\tau,a) - \what r_{t'}(\s_\tau,a)\big) } \nonumber\\
			&&   + \big(\what r_{t'}(\s_\tau,a) -  \E[\what r_{t'}(\s_\tau,a)]\big)  + \big(  \E[\what r_{t'}(\s_\tau,a)]  -   r(s_\tau,a)\big)  \nonumber\\
   &&    +   \big(r(s_\tau,a) - \E[\what{r}_{t':t}(\s_\tau,a)]\big)   +  \big(\E[\what{r}_{t':t}(\s_\tau,a)]  -  \what{r}_{t':t}(\s_\tau,a)\big)\  \nonumber\\
&\leq&   \st \geps_{kj} + 2 \sqrt{\frac{\log(48 S_{{kj}} A t'^3/\delta)}{2N_{t'}(\s,a)}} + 2 \veps_{kj} 
	  +  \sqrt{\frac{\log(48 S_{{kj}} A t'^3/\delta)}{2v_{kj}(\s,a)}},   \label{eq:dec}
\end{eqnarray}
where we bounded the first term in the decomposition by~\eqref{eqn:cond_r},
the second term by Lemma~\ref{lem:chernov},
the third and fourth according to \eqref{eq:model-r},
and the fifth by an equivalent to Lemma~\ref{lem:chernov} for the rewards collected so far in the current run.
In summary, with probability at least $1-\frac{\delta}{12t'^2}$ we can bound \eqref{eqn:firstdecompo} as
\begin{eqnarray}
\hspace{-0.5cm}\sum_{\s,a}\! v_{kj}(\s,a)\big(r^+_{kj}(\s,a) - \what{r}_{t':t}(\s,a)\big)   
\!\leq  3\geps_{kj} \ell_{kj} \!+\!\! \tfrac{3}{\sqrt{2}}\!\sum_{\s,a}\!\! \sqrt{v_{kj}(\s,a)\log\!\big(\!\tfrac{48 S_{{kj}} A t'^3}{\delta}\!\big)}, \label{eq:2.1}
\end{eqnarray}
where we used the assumption that $\geps_{kj}\geq \veps_{kj}$ as well as $v_{kj}(\s,a) \leq N_{t'}(\s,a)$.
\smallskip

\textbf{Bounding the bias term \eqref{eqn:firstdecompo1}.}  
First, notice that we can use \eqref{eqn:empevi} to bound
\[
   \sum_{\s,a}v_{kj}(\s,a) \big( \what \rho^+_{kj}(\phi_{kj})-r_{kj}^+(\s,a)\big) 
      \leq  \sum_{\s,a}v_{kj}(\s,a) \Big(\! \sum_{\s'}  p^+_{kj}(\s'|\s,a)\, u^+_{kj}(\s')-u^+_{kj}(\s)\! \Big),
\]
where $u^+_{kj}(\s):= u_{t_{kj},\phi_{kj}}^+(\s)$ are the state values given by EVI.
Further, since the transition probabilities $p_{kj}^+(\cdot|\s,a)$ sum to~$1$, this
is invariant under a translation of the vector $\v{u}_{kj}^{+}$.
In particular, defining
   $w_{kj}(\s) \eqdef u_{kj}^+(\s) - \tfrac12 \big(\min_{\s\in \cS_{{kj}}} u_{kj}^+(\s) + \max_{\s\in \cS_{{kj}}} u_{kj}^+(\s)\big)$, 
so that $\|\v{w}_{kj}\|_\infty = \span_{kj}^+/2$, 
we can replace $\v{u}_{kj}^{+}$ with $\v{w}_{kj}$, and \eqref{eqn:firstdecompo1} can be bounded as \vspace{-1mm}
\begin{eqnarray}
\lefteqn{ \sum_{\s,a}v_{kj}(\s,a)\big(\what \rho^+_{kj}(\phi_{kj})-r_{kj}^+(\s,a)\big) } \nonumber  \\ 
&\leq&\sum_{\s,a} \sum_{\tau=t'}^t\mathds{1}\big\{(\s_\tau,a_\tau)=(\s,a)\big\}  \Big( \sum_{\s'\in\cS_{kj}}  p^+_{kj}(\s'|\s_\tau,a)\, w_{kj}(\s')-w_{kj}(\s_\tau) \Big). \label{eq:bias-sum}
\end{eqnarray}
Now we decompose for each time step $\tau=t',\ldots,t$ 
\begin{eqnarray}
 \lefteqn{ \sum_{\s'\in\cS_{kj}} p^+_{kj}(\s'|\s_\tau,a)\, w_{kj}(\s') - w_{kj}(\s_\tau) =  } \nonumber \\
&&  \sum_{\s'\in\cS_{kj}} \Big(  p^+_{kj}(\s'|\s_\tau,a) - \what p_{t'}(\s'|\s_\tau,a) \Big) \, w_{kj}(\s')     \label{eq:bias1}\\
&&  + \sum_{\s'\in\cS_{kj}} \Big(   \what p_{t'}(\s'|\s_\tau,a)  -  \E[\what p_{t'}(\s'|\s_\tau,a)]\Big) \, w_{kj}(\s')   \label{eq:bias2}\\
&&  + \sum_{\s'\in\cS_{kj}} \Big( \E[\what p_{t'}(\s'|\s_\tau,a)]  - \sum_{s': \alpha(s')=\s'}  p( s' | s_\tau,a) \Big) \, w_{kj}(\s')  \label{eq:bias3}\\
&&  + \sum_{\s'\in\cS_{kj}}\sum_{s': \alpha(s')=\s'}   p( s' | s_\tau,a) \,  w_{kj}(\s')  -   w_{kj}(\s_\tau)   \label{eq:bias4}
\end{eqnarray}
and continue bounding each of these terms individually.
\smallskip

\textbf{Bounding \eqref{eq:bias1}:} Using $\| \v{w}_{kj} \|_\infty = \span_{kj}^+/2$,  \eqref{eq:bias1} is bounded according to \eqref{eqn:cond_p} as 
\begin{eqnarray}
 \sum_{\s'\in\cS_{kj}} \!\!\!\! \big(  p^+_{kj}(\s'|\s_\tau,a) - \what p_{t'}(\s'|\s_\tau,a)\! \big)  w_{kj}(\s')
	\leq \big\| p^+_{kj}(\cdot|\s_\tau,a) - \what p_{t'}(\cdot|\s_\tau,a) \big\|_1 \| \v{w}_{kj} \|_\infty \nonumber\\
      \leq \tfrac{\geps_{kj} \span_{kj}^+}{2} + \tfrac{\span_{kj}^+}{2}\sqrt{\tfrac{2S_{kj}\log(48 S_{kj} A t'^3/\delta)}{N_{t'}(s,a)}}. \qquad\qquad \label{eq:bias1a}
\end{eqnarray}

\textbf{Bounding \eqref{eq:bias2}:} Similarly, by Lemma~\ref{lem:chernov} 
with probability at least $1-\frac{\delta}{24t'^2}$ we can bound \eqref{eq:bias2} at all time steps $\tau$ as
\begin{eqnarray}
 \sum_{\s'\in\cS_{kj}}\!\!\! \Big(   \what p_{t'}(\s'|\s_\tau,a)  -  \E[\what p_{t'}(\s'|\s_\tau,a)]\Big) \, w_{kj}(\s') 
        \leq \tfrac{\span_{kj}^+}{2}\sqrt{\tfrac{2S_{kj}\log(48 S_{kj} A t'^3/\delta)}{N_{t'}(s,a)}} .  \label{eq:bias2a}
\end{eqnarray}

\textbf{Bounding \eqref{eq:bias3}:} 
By \eqref{eq:model-p} and using that $\| \v{w}_{kj} \|_\infty = \span_{kj}^+/2$, we can bound~\eqref{eq:bias3} by
\begin{equation}\label{eq:bias3f}
 \sum_{\s'\in\cS_{kj}} \Big( \E[\what p_{t'}(\s'|\s_\tau,a)]  - \sum_{s': \alpha(s')=\s'}  p( s' | s_\tau,a) \Big) \, w_{kj}(\s') < \tfrac{\veps_{kj} \span_{kj}^+}{2}.
\end{equation}

\textbf{Bounding \eqref{eq:bias4}:} We set $w'(s):=w_{kj}(\alpha(s))$ for $s\in \cS^\circ$ and rewrite \eqref{eq:bias4} as
\begin{eqnarray}\label{eq:ano}
  \sum_{\s'\in\cS_{kj}}\sum_{s': \alpha(s')=\s'} \!\!\!\!\!  p( s' | s_\tau,a) \,  w_{kj}(\s')  -   w_{kj}(\s_\tau) 
	  = \sum_{s'\in\cS^\circ}   p( s' | s_\tau,a) \,  w'(s')  -   w'(s_\tau).\quad\,
\end{eqnarray}
Summing this term over all steps $\tau=t',\ldots,t$, we can rewrite the sum as a martingale difference sequence, so that
Azuma-Hoeffding's inequality (e.g., Lemma~10 of \cite{jaorau}) yields that with probability at least $1-\frac{\delta}{24t'^3}$ 
\begin{eqnarray}
\sum_{\tau=t'}^{t}  \sum_{s'\in\cS^\circ}  p( s' | s_\tau,a) \, w'(s')  -   w'(s_\tau)  
 =  \sum_{\tau=t'}^{t}   \Big( \sum_{s'} {p}(s'|s_\tau,a)\,w'(s') - w'(s_{\tau+1})\Big) \nonumber  \\
      +  w'(s_{t+1})  - w'(s_{t'}) 
 \;\leq\;  \span_{kj}^{+}\sqrt{2\ell_{kj}\log(\tfrac{24t'^3}{\delta})}\,  +  \span_{kj}^{+} \,, \qquad \label{eq:bias4a}
\end{eqnarray}
since the sequence $X_\tau := \sum_{s'} {p}(s'|s_\tau,a)\, w'(s') - w'(s_{\tau+1})$  
is a martingale difference sequence with $|X_t| \leq \span_{kj}^{+}$. 
\smallskip

\textbf{Wrap-up.}
Summing over the steps $\tau=t',\ldots,t$, we get from \eqref{eq:bias-sum}, \eqref{eq:bias4}, \eqref{eq:bias1a}, \eqref{eq:bias2a}, \eqref{eq:bias3f}, \eqref{eq:ano}, and \eqref{eq:bias4a} that with probability at least $1-\frac{\delta}{12t'^2}$
\begin{eqnarray}
\lefteqn{ \sum_{\s,a}v_{kj}(\s,a)\big(\what \rho^+_{kj}(\phi_{kj})-r_{kj}^+(\s,a)\big) \;\leq\;   \geps_{kj} \span_{kj}^+ \ell_{kj} } \nonumber \\
 &&+  \span_{kj}^+ \sum_{\s,a} \sqrt{2 v_{kj}(\s,a){\,S_{kj}\log\big(\tfrac{48 S_{kj} A t'^3}{\delta}\big)}}  +   \span_{kj}^{+}\sqrt{2\ell_{kj}\log\big(\tfrac{24t'^2}{\delta}\big)}\,  +  \span_{kj}^{+} ,\quad \label{eq:bias-summ}
\end{eqnarray} 
using that $v_{kj}(\s,a)\leq N_{t'}(\s,a)$ and the assumption that $\veps_{kj}\leq \geps_{kj}$. 
Combining \eqref{eqn:firstdecompo}, \eqref{eq:2.1}, and \eqref{eq:bias-summ} gives the claimed lemma.  \qed
\end{proof}

Summing Lemma~\ref{lem:error-t} over all episodes gives the following lemma, for a detailed proof see Appendix~\ref{app:error}.
\begin{lemma}\label{lem:error}
With probability at least $1-\delta$, for all runs~$j$ of all episodes~$k$ the chosen model $\phi_{kj}$ passes all tests, provided that
$\geps_{kj}(\phi_{kj})\geq \veps(\phi_{kj})$.
\end{lemma}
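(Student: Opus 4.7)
The plan is to promote Lemma~\ref{lem:error-t} into an all-runs, all-steps statement via a union bound, using the doubling structure of run lengths to keep the total failure probability at $O(\delta)$. Since Lemma~\ref{lem:error-t} bounds the per-step failure probability by $\tfrac{\delta}{6 t_{kj}^2}$ conditional on $\geps_{kj}(\phi_{kj}) \geq \veps(\phi_{kj})$, it suffices to show that summing this quantity over all triples $(k,j,t)$, with $t$ ranging over steps of the $j$-th run in episode $k$, yields $O(\delta)$.

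The key combinatorial observation is that, by inspection of Algorithm~\ref{fig:algo}, any run which does not complete its scheduled $2^j$ steps also terminates the current episode (via condition (ii) or (iii) in the algorithm). Hence within a single episode $k$, runs $1, \ldots, j-1$ all attain their full lengths $2, 4, \ldots, 2^{j-1}$, which forces $t_{kj} \geq t_{k,1} + 2 + 4 + \cdots + 2^{j-1} \geq 2^j - 1$. Consequently, at every step $t$ of the $j$-th run, $t \leq t_{kj} + 2^j \leq 3 t_{kj}$, i.e., $t_{kj} \geq t/3$.

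With this bound in hand, applying Lemma~\ref{lem:error-t} at each absolute step $t \geq 1$, where $(k,j)$ is the unique run containing $t$, yields a per-step failure probability of at most $\tfrac{\delta}{6 t_{kj}^2} \leq \tfrac{9\delta}{6 t^2} = \tfrac{3\delta}{2 t^2}$. Summing over $t = 1, 2, \ldots$ gives a total failure probability of at most $\tfrac{3\delta}{2}\cdot\tfrac{\pi^2}{6} = \tfrac{\pi^2 \delta}{4}$, which establishes the lemma after adjusting the universal constant in Lemma~\ref{lem:error-t} (e.g.\ tightening the $\tfrac{1}{6}$ to a smaller constant) or by rescaling $\delta$ up front.

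The argument is conceptually routine; the main obstacle is bookkeeping. One must verify that the individual events in Lemma~\ref{lem:error-t}'s proof combine consistently across all $t$ within a run: the Chernov bounds at the run's start time $t'$ do not depend on $t$ and hold once per run, whereas the Azuma-Hoeffding and current-run Chernov terms are set up with factor $\tfrac{\delta}{24 t'^3}$ per step, and summing over the at most $2^j \leq 2 t_{kj}$ steps of the run produces exactly the desired $O(\delta/t_{kj}^2)$ budget. This matches the $\tfrac{\delta}{6 t_{kj}^2}$ bound in the lemma statement, so the outer sum over all runs indexed by absolute time $t$ goes through as described.
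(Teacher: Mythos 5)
Your approach is the same as the paper's: a union bound of the per-step failure probability $\tfrac{\delta}{6t_{kj}^2}$ from Lemma~\ref{lem:error-t} over all steps of all runs, using the fact that runs which end prematurely also end the episode, so that $t_{kj}\geq 2^j-1$ and hence $2t_{kj}\geq \ell_{kj}$. Your combinatorial observation and the remark that the union bound over the events inside Lemma~\ref{lem:error-t} is merely conservative are both correct. The one place you fall short is quantitative: bounding $t_{kj}\geq t/3$ (one can in fact get $t_{kj}\geq t/2$) and summing $\sum_t \tfrac{3\delta}{2t^2}$ yields $\tfrac{\pi^2\delta}{4}>\delta$, so as written you only prove the lemma with $\delta$ replaced by a constant multiple, and you explicitly defer to ``adjusting the constant.'' The paper avoids this by a slightly sharper telescoping comparison: for a run of length $\ell\leq 2t'$ starting at $t'$,
\[
  \ell\cdot \tfrac{\delta}{6t'^2} \;\leq\; \tfrac{\ell\delta}{2t'(t'+\ell)} \;=\; \tfrac{\delta}{2t'}-\tfrac{\delta}{2(t'+\ell)} \;\leq\; \sum_{\tau=t'}^{t''}\tfrac{\delta}{2\tau^2},
\]
so that summing over all runs gives exactly $\sum_{\tau\geq 1}\tfrac{\delta}{2\tau^2}=\tfrac{\pi^2\delta}{12}<\delta$, matching the stated constant without rescaling. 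This is the only refinement your argument is missing; the rest is sound.
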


\subsection{Preliminaries for the proof of Theorem~\ref{thm:mainFinite}}\label{sub:regret}
We start with some auxiliary results for the proof of Theorem~\ref{thm:mainFinite}.
Lemma \ref{lem:bd} bounds the bias span of the optimistic policy, 
Lemma~\ref{lem:geps} deals with the estimated precision of $\phi_{kj}$,
and Lemma~\ref{lem:episodes} provides a bound for the number of episodes.
For proofs see Appendix~\ref{app:bd}, \ref{app:geps}, and \ref{app:episodes}.
\begin{lemma}\label{lem:bd}
Assume that the confidence intervals given in Lemma~\ref{lem:chernov} hold at some step~$t$ for all states $\s\in\cS_\phi$ and all actions $a$. 
Then for each $\phi$, the set of plausible MDPs $\mathcal{M}_{t,\phi}$ contains an MDP $\wt{M}$ with diameter $D(\wt{M})$ upper bounded by the true diameter $D$,
provided that $\geps(\phi)\geq\veps(\phi)$. Consequently, the respective bias span $\span(\v{u}_{t,\phi}^+)$ is bounded by $D$ as well.
\end{lemma}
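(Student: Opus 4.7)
The plan is to exhibit a specific MDP $\wt{M}\in\cM_{t,\phi}$ with diameter at most $D$; the bound $\span(\v{u}_{t,\phi}^+)\le D$ then follows from the standard property of extended value iteration that the span of its output is bounded by the diameter of any MDP in the plausible set (cf.\ Theorem~11 of~\cite{jaorau}).

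Let $\alpha:\cS^\circ\to\cS_\phi$ be the surjection witnessing the $\veps(\phi)$-approximation of $\phit$. For each $\s\in\cS_\phi$ I fix a representative $s_{\s}\in\alpha^{-1}(\s)$, preferably of the form $s_{\s}=\phit(h)$ for some history $h$ with $\phi(h)=\s$, and define $\wt{M}$ on $\cS_\phi$ by $\wt{r}(\s,a):=r(s_{\s},a)$ and $\wt{p}(\s'|\s,a):=\sum_{s'\in\alpha^{-1}(\s')}p(s'|s_{\s},a)$. Plausibility of $\wt{M}$ is checked by a triangle inequality of the form $|\wt{r}(\s,a)-\what{r}_t(\s,a)|\le|r(s_{\s},a)-\E[\what{r}_t(\s,a)]|+|\E[\what{r}_t(\s,a)]-\what{r}_t(\s,a)|$, whose first summand is at most $\veps(\phi)$ by the averaging argument that yields~\eqref{eq:model-r} (via~\eqref{eq:agg-rd1}) and whose second summand is controlled by the Chernov bound of Lemma~\ref{lem:chernov}. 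Since $\geps(\phi)\ge\veps(\phi)$, this produces~\eqref{eqn:cond_r}; an entirely analogous decomposition relying on~\eqref{eq:agg-pd1}, \eqref{eq:agg-pd2}, \eqref{eq:model-p} and Lemma~\ref{lem:chernov} yields~\eqref{eqn:cond_p}, so $\wt{M}\in\cM_{t,\phi}$.

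To bound $D(\wt{M})\le D$, fix $\s,\s'\in\cS_\phi$ and let $\pi^*$ be a stationary policy of $M(\phit)$ that minimizes the expected hitting time from $s_{\s}$ to the fiber $\alpha^{-1}(\s')$; since $s_{\s'}\in\alpha^{-1}(\s')$, this time is at most $D$. Lift $\pi^*$ to $\bar\pi(\bar{s}):=\pi^*(s_{\bar{s}})$ on $\cS_\phi$. By construction of $\wt{p}$, one $\wt{M}$-step from $\bar{s}$ under $\bar\pi$ equals the $\alpha$-projection of one $M(\phit)$-step from the representative $s_{\bar{s}}$ under $\pi^*$, so the $\wt{M}$-trajectory can be coupled with the ``representative-reset'' process on $\cS^\circ$ that at each step discards the current $\cS^\circ$-state, teleports to the representative of its $\alpha$-class, and takes one $\pi^*$-step. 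The expected time for this reset chain to enter $\alpha^{-1}(\s')$ is at most $D$ by the optimality of $\pi^*$ from every representative.

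The main obstacle is precisely this diameter bound. Unlike in the standard UCRL2 analysis, the true MDP $M(\phit)$ does not belong to $\cM_{t,\phi}$ (the state spaces differ), and beyond the first step the $\wt{M}$-trajectory decouples from any naive lift to $\cS^\circ$ under $\pi^*$, because $\wt{p}(\cdot|\bar{s},a)$ is computed from the representative $s_{\bar{s}}$ rather than from the actual $\cS^\circ$-state. Making the representative-reset coupling rigorous --- and verifying that the reset does not inflate the expected hitting time beyond $D$ --- is the technical crux; it will hinge on the fiber-closeness property~\eqref{eq:agg-pd1}, which bounds the discrepancy between transitions from any two states in the same $\alpha$-fiber, together with a careful use of the optimality of $\pi^*$ from the full set of representatives.
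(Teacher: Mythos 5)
Your construction of $\wt{M}$ is exactly the paper's: the representative $s_{\s}$ is the paper's reference map $\beta:\cS_\phi\to\cS^\circ$ with $\alpha(\beta(\s))=\s$, and your plausibility check (triangle inequality through $\E[\what p_t(\cdot|\s,a)]$ using \eqref{eq:model-p} and Lemma~\ref{lem:chernov}, analogously for rewards) is also the paper's. The problem is the part you yourself flag as the crux: the bound $D(\wt{M})\leq D$ is not proved. The ``representative-reset'' coupling does correctly describe the dynamics of $\wt{M}$ under the lifted policy, but the conclusion you draw from it does not follow: after each reset the process is restarted from $\beta(\alpha(Y_n))$ and evolves under the \emph{reset} dynamics rather than the true ones, so the optimality of $\pi^*$ in $M(\phit)$ from every representative only tells you that each restart point has true hitting time at most $D$ --- it does not prevent the reset chain from being perpetually thrown back to representatives that are far from the target fiber, and hence does not bound its expected hitting time by $D$. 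Moreover, the repair you propose via the fiber-closeness condition \eqref{eq:agg-pd1} could at best control the per-step perturbation caused by the reset and would yield a bound of the form $D$ plus an $\veps$-dependent term; the lemma asserts the exact inequality $D(\wt{M})\leq D$, which is what the penalization analysis subsequently uses.

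The paper closes this step by a domination argument that uses no approximation property at all: since $\beta(\s')\in\alpha^{-1}(\s')$, the construction gives the pointwise inequality $\wt p(\s'|\s,a)=\sum_{s'\in\alpha^{-1}(\s')}p(s'|\beta(\s),a)\geq p(\beta(\s')|\beta(\s),a)$ for all $\s,\s',a$. Hence every trajectory $\s_0\to\s_1\to\cdots\to\s_k$ of $\wt{M}$ is at least as probable as the trajectory $\beta(\s_0)\to\cdots\to\beta(\s_k)$ of the true MDP under the same actions, and reaching $\s'$ in $\wt{M}$ can therefore be achieved at least as fast (in expectation) as reaching $\beta(\s')$ in $M(\phit)$, which is bounded by $D$. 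This is the missing idea in your write-up; without it (or an equivalent exact argument) the lemma, and with it the bound $\span(\v{u}_{t,\phi}^+)\leq D$ used throughout the regret analysis, is not established.
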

\begin{lemma}\label{lem:geps}
If all chosen models $\phi_{kj}$ pass
all tests in run~$j$ of episode~$k$ whenever $\geps(\phi_{kj})\geq \veps(\phi_{kj})$, then
$\geps(\phi) \leq \max\{\veps_0, 2\veps(\phi)\}$ always holds for all models $\phi$.
\end{lemma}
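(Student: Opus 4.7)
The plan is to prove the bound by a short induction on the number of times the estimate $\geps(\phi)$ has been doubled for the specific model $\phi$, since these doublings are the only way $\geps(\phi)$ can change (line~10 of \AlgoName).

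First I would restate the hypothesis in its contrapositive form: the assumption that every chosen model passes its test whenever $\geps(\phi_{kj})\geq \veps(\phi_{kj})$ means that whenever a model's test fails (and so its estimate is doubled), we must have $\geps(\phi_{kj}) < \veps(\phi_{kj})$ at that moment. This is the single fact driving the argument.

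For the induction itself, the base case is trivial: initially $\geps(\phi)=\veps_0\leq \max\{\veps_0,2\veps(\phi)\}$. For the inductive step, suppose the bound holds before some doubling of $\geps(\phi)$, and let $g$ denote its value just before that doubling. Since $\phi$ was selected as $\phi_{kj}$ in that run and failed its test, the contrapositive gives $g<\veps(\phi)$. After the doubling, $\geps(\phi)=2g<2\veps(\phi)\leq \max\{\veps_0,2\veps(\phi)\}$, preserving the invariant. Hence the bound holds at every time step.

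I do not foresee a serious obstacle: the entire argument is combinatorial and relies only on the update rule in line~10 combined with the lemma's own hypothesis. The only subtlety worth being explicit about is that no value of $\geps(\phi)$ ever changes except through this doubling rule, so between doublings the bound is preserved trivially; the induction only needs to handle the discrete update events.
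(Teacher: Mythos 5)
Your proof is correct and rests on the same key observation as the paper's: a doubling of $\geps(\phi)$ can only occur when $\geps(\phi)<\veps(\phi)$, so the value after doubling stays below $2\veps(\phi)$. The paper organizes this as a case split on whether $\veps_0\leq\veps(\phi)$, whereas your induction over doubling events handles both cases uniformly; the content is the same.
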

\begin{lemma}\label{lem:episodes}
Assume that all chosen models $\phi_{kj}$ pass all tests in run~$j$ of episode~$k$ whenever $\geps(\phi_{kj})\geq \veps(\phi_{kj})$.
Then the number of episodes $K_T$ after any $T\geq SA$ steps is upper bounded as
$K_T \,\leq\,  S A \log_2\!\big(\tfrac{2T}{S A}\big) + \sum_{\phi:\veps(\phi)>\veps_0}\log_2 \big(\tfrac{\veps(\phi)}{\veps_0}\big)$.
\end{lemma}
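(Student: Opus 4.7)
The plan is to decompose $K_T$ according to the reason each episode terminates. Inspecting Algorithm~\ref{fig:algo}, an episode can end in only one of two ways: via line~11, because the reward test in \eqref{eq:test} failed (in which case $\geps(\phi_{kj})$ is doubled), or via line~13, because some visit counter has at least doubled since the start of the episode. Writing $K_T = K_T^{\mathrm{fail}} + K_T^{\mathrm{dbl}}$, I bound the two contributions separately.

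For $K_T^{\mathrm{dbl}}$ I apply the standard UCRL-style doubling argument. Each such termination requires that some counter $N^\phi(s,a)$ for the currently active model $\phi = \phi_{kj}$ at least doubles during the episode, so the pair $(\phi,s,a)$ can be the trigger at most $\lceil\log_2(1+N_T^\phi(s,a))\rceil$ times overall. Since at any time step only the active model's counters advance, we have $\sum_\phi \sum_{s \in \cS_\phi, a} N_T^\phi(s,a) = T$, and concavity of $\log_2$ (Jensen's inequality) therefore gives
\[ K_T^{\mathrm{dbl}} \;\leq\; \sum_\phi \sum_{s \in \cS_\phi, a} \log_2\!\bigl(1+N_T^\phi(s,a)\bigr) \;\leq\; SA\,\log_2\!\bigl(1 + \tfrac{T}{SA}\bigr) \;\leq\; SA\,\log_2\!\bigl(\tfrac{2T}{SA}\bigr), \]
where the last step uses the assumption $T \geq SA$.

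For $K_T^{\mathrm{fail}}$ I invoke the hypothesis of the lemma: whenever $\geps(\phi_{kj}) \geq \veps(\phi_{kj})$ the test is passed, so at every failed test we must have $\geps(\phi_{kj}) < \veps(\phi_{kj})$. Since $\geps(\phi)$ starts at $\veps_0$ and is doubled each time model $\phi$ fails a test, a model with $\veps(\phi) \leq \veps_0$ never produces a failure, while a model with $\veps(\phi) > \veps_0$ produces at most $\lceil \log_2(\veps(\phi)/\veps_0) \rceil$ failures before $\geps(\phi)$ catches up to $\veps(\phi)$. Absorbing the ceiling into the logarithm yields
\[ K_T^{\mathrm{fail}} \;\leq\; \sum_{\phi:\,\veps(\phi) > \veps_0} \log_2\!\bigl(\tfrac{\veps(\phi)}{\veps_0}\bigr), \]
and adding the two bounds gives the lemma.

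The argument is essentially bookkeeping; the only subtle point worth verifying is the per-model accounting that makes $\sum_\phi \sum_{s,a} N^\phi_T(s,a) = T$, which legitimizes the Jensen step. This identity is exactly the fact that at each of the $T$ time steps precisely one model (namely $\phi_{kj}$) has a single one of its state-action counters incremented by one. The remainder parallels the analogous lemma in \cite{oms}.
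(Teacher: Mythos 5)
Your proof is correct and follows essentially the same route as the paper's: split the episodes by termination reason, bound the test-failure terminations by the doubling of $\geps(\phi)$ from $\veps_0$ up to $\veps(\phi)$, and bound the count-doubling terminations by the standard UCRL-style argument (which the paper simply cites from \cite{jaorau} and \cite{oms} while you spell it out). The only cosmetic blemish, shared with the paper, is that the per-model failure count is really $\lceil \log_2(\veps(\phi)/\veps_0)\rceil$ rather than $\log_2(\veps(\phi)/\veps_0)$, but this affects nothing downstream.
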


\subsection{Bounding the regret (Proof of Theorem~\ref{thm:mainFinite})}
Now we can finally turn to showing the regret bound of Theorem~\ref{thm:mainFinite}. We will assume that all chosen models $\phi_{kj}$ pass all tests in run~$j$ of episode~$k$ 
whenever $\geps(\phi_{kj})\geq \veps(\phi_{kj})$. According to Lemma \ref{lem:error} this holds with probability at least $1-\delta$.

Let $\phi_{kj} \in \Phi$ be the model that has been chosen at time $t_{kj}$, 
and consider the last but one step $t$ of run~$j$ in episode $k$.
The regret $\Delta_{kj}$ of run~$j$ in episode $k$ with respect to $\rho^*:=\rho^*(\phi^\circ)$ is bounded by
\begin{eqnarray*}
\st  \Delta_{kj} &\eqdef& (\ell_{kj}+1)\rho^* - \st \sum_{\tau=t_{kj}}^{t+1} r_\tau 
\;\leq\; 
\ell_{kj}\big(\rho^* - \rho_{kj}\big) + \rho^*  +  \ell_{kj}\rho_{kj} - \sum_{\tau=t_{kj}}^{t}r_\tau \,,
\end{eqnarray*}
where as before $\ell_{kj}:=t-t_{kj}+1$ denotes the length of run~$j$ in episode $k$ up to the considered step $t$.
By assumption the test \eqref{eq:test} on the collected rewards has been passed at step $t$,
so that
\begin{eqnarray}\label{eq:epr}
   \Delta_{kj} \leq \ell_{kj}\big(\rho^* - \rho_{kj}\big) + \rho^*  +  \lob_{kj}(t),
\end{eqnarray}
and we continue bounding the terms of $\lob_{kj}(t)$.
\smallskip

\textbf{Bounding the regret with the penalization term.}
Since we have $v_{kj}(\s,a) \leq N_{t_{k1}}(\s,a)$ for all $\s \in \cS_{{kj}},a\in\cA$
and also $\sum_{\s,a} v_{kj}(\s,a) = \ell_{kj} \leq 2^j$, by Cauchy-Schwarz inequality 
$\sum_{\s,a} \sqrt{v_{kj}(\s,a)} \leq 2^{j/2}\sqrt{S_{{kj}} A }$.
Applying this to the definition~\eqref{eq:lob}
of $\lob_{kj}$, we obtain from \eqref{eq:epr} and by the definition~\eqref{eq:pen} of the penalty term that
\begin{eqnarray}
\Delta_{kj} &\,\leq\,& \ell_{kj}\big(\rho^* - \rho_{kj}\big) + \rho^*    + 2^{j/2}\big(\span_{kj}^{+}\sqrt{2S_{kj}}+\tfrac{3}{\sqrt{2}}\big) \sqrt{S_{{kj}} A \log\big(\tfrac{48 S_{{kj}} A t_{kj}^3}{\delta}\big)}\;  \nonumber \\
&& + \,\,\span_{kj}^{+}\sqrt{2\ell_{kj}\log\big(\tfrac{24t_{kj}^2}{\delta}\big)} + \span_{kj}^{+} +  \geps\big(\phi_{kj}) \ell_{kj} (\span_{kj}^{+} + 3 \big) \nonumber\\
&\leq&  \ell_{kj}\big(\rho^* - \rho_{kj}\big) + \rho^*   +   2^j \pen(\phi_{kj},t_{kj}).  \label{eq:r1}
\end{eqnarray}

\textbf{The key step.}
Now, by definition of the algorithm and Lemma \ref{lem:3}, for any approximate model $\phi$ 
we have  
\begin{eqnarray}
 \rho_{kj} - \pen(\phi_{kj},t_{kj}) &\geq& \what \rho_{t_{kj}}^+(\phi)- \pen(\phi,t_{kj}) \label{eq:key}\\
 &\geq& \rho^* - (D+1) \veps(\phi)  -  \pen(\phi,t_{kj}) - 2t_{kj}^{-1/2},  \nonumber
\end{eqnarray}
or equivalently
$\rho^* - \rho_{kj} + \pen(\phi_{kj},t_{kj}) 
	    \leq  \pen(\phi,t_{kj})  +  (D+1) \veps(\phi) +  2 t_{kj}^{-1/2}$.
Multiplying this inequality with $2^j$ and noting that $\ell_{kj}\leq 2^j$ then gives
\begin{eqnarray*}
\ell_{kj}\big(\rho^* - \rho_{kj}\big)  + 2^j \pen(\phi_{kj},t_{kj}) 
      &\leq&  2^j \pen(\phi,t_{kj})  +  2^{j}(D+1) \veps(\phi) +  2^{j+1} t_{kj}^{-1/2}.  
\end{eqnarray*}
Combining this with \eqref{eq:r1}, we get by application of Lemma~\ref{lem:bd}, i.e., $\span(\v{u}^+_{t_{kj},\phi})  \leq D$,
and the definition of the penalty term~\eqref{eq:pen} that
\begin{eqnarray*}
\Delta_{kj} &\leq& \rho^* +    2^{j/2} \Big( \big( D\,\sqrt{2S_{\phi}}\, + \tfrac{3}{\sqrt{2}}\big) \sqrt{S_{\phi} A \log\big(\tfrac{48 S_{\phi} A t_{kj}^3}{\delta}\big)} +  D\sqrt{2\log(\tfrac{24t_{kj}^2}{\delta})} \Big) \nonumber \\
&& +  D  +   2^j\geps(\phi) \big(D + 3\big)  +  2^{j}(D+1) \veps(\phi) +  2^{j+1} t_{kj}^{-1/2}  \,. \quad \label{eq:dkj}
\end{eqnarray*}
By Lemma~\ref{lem:geps} and using that $2 t_{kj} \geq 2^j$ (so that $2^{j+1} t_{kj}^{-1/2} \leq 2\sqrt{2} \cdot 2^{j/2}$) we get 
\begin{eqnarray}
\Delta_{kj} &\leq& \rho^* +    2^{j/2} \Big( \big( D\,\sqrt{2S_{\phi}}\, + \tfrac{3}{\sqrt{2}}\big) \sqrt{S_{\phi} A \log\big(\tfrac{48 S_{\phi} A t_{kj}^3}{\delta}\big)} + D\sqrt{2\log(\tfrac{24t_{kj}^2}{\delta})} \Big) \nonumber \\
&& +  D  +  \tfrac32\cdot 2^{j}\max\{\veps_0,2\veps(\phi)\} \big(D + \tfrac73\big)  +  2\sqrt{2} \cdot 2^{j/2}.  \quad \label{eq:dkj2}
\end{eqnarray}

\textbf{Summing over runs and episodes.}
Let $J_k$ be the total number of runs in episode $k$, and let $K_T$ be the total number of episodes up to time $T$.
Noting that $t_{kj}\leq T$ and summing \eqref{eq:dkj2} over all runs and episodes gives
\begin{eqnarray*}
\lefteqn{\Delta(\phi^\circ,T)=\sum_{k=1}^{K_T}\sum_{j=1}^{J_k} \Delta_{kj}   \leq  \big(\rho^* + D \big)\!\sum_{k=1}^{K_T} J_k  + \tfrac32\max\{\veps_0,2\veps(\phi)\} \!\big( D + \tfrac73\big)  \sum_{k=1}^{K_T}\sum_{j=1}^{J_k} 2^{j}}   \label{eq:r2} \\
&&+ \bigg(\! \big( D\sqrt{2S_{\phi}}\, + \tfrac{3}{\sqrt{2}}\big) \sqrt{S_{\phi} A \log\!\big(\tfrac{48 S_{\phi} A T^3}{\delta}\big)} + D\sqrt{2\log(\tfrac{24T^2}{\delta})} + 2\sqrt{2} \!\bigg)\! \sum_{k=1}^{K_T}\sum_{j=1}^{J_k} 2^{j/2}.   \nonumber  
\end{eqnarray*}
As shown in Section 5.2 of~\cite{oms}, $\sum_k J_k \leq K_T \log_2(2T/K_T)$, 
$\sum_{k}\sum_{j} 2^{j} \leq 2(T+K_T)$ and 
$\sum_{k}\sum_{j} 2^{j/2} \leq \sqrt{2K_T \log_2(2T/K_T)(T+K_T)}$, 
and we may conclude the proof applying Lemma~\ref{lem:episodes} and some minor simplifications. \qed
\smallskip

\subsubsection*{Acknowledgments.}  
 This research was funded by the Austrian Science Fund (FWF): P~26219-N15, 
 the European Community's  FP7 Program  under grant agreements 
 n$^\circ$\,270327 (CompLACS) and 306638 (SUPREL), the Technion,
the Ministry of Higher Education and Research of France, Nord-Pas-de-Calais Regional Council, and
 FEDER (Contrat de Projets Etat Region CPER  2007-2013).

\bibliographystyle{splncs03}

\newpage

\begin{appendix}
 
\section{Proof of Theorem~\ref{thm:aggubo}}\label{app:proofthm1}
We start with an error bound for approximation, assuming we compare two MDPs over the same state space.

\begin{lemma}\label{lem:1}
Consider a communicating MDP $M=(\cS,\cA,r,p)$, and another MDP ${\bar M}=({\cS},\cA,{\bar r},{\bar p})$ 
over the same state-action space which is an $\veps$-approximation of $M$ 
(for $\alpha={\rm id}$).
Assume that an optimal policy $\pio$ of $M$ is performed on $\bar M$ for $\ell$ steps, 
and let $\bar v^*(s)$ be the number of times state $s$ is visited state among these $\ell$ steps. 
Then
 \[
    \st \ell \rho^*({M}) - \sum_{s\in \cS} \bar v(s) \cdot \bar r(s,\pio(s))  \;<\;   \ell\veps(D + 1)  + D \sqrt{2\ell\log(1/\delta)}\,  +  D 
 \]
with probability at least $1-\delta$, where $D=D(M)$ is the diameter of $M$.
\end{lemma}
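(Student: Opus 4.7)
The plan is to apply the Poisson equation \eqref{eq:poisson} for the optimal policy $\pio$ on $M$ and telescope along the trajectory $s_1,\dots,s_\ell,s_{\ell+1}$ generated by running $\pio$ on $\bar M$. Let $\lambda$ denote the bias of $\pio$ on $M$, so that $\mathrm{span}(\lambda)\leq D$ by the assumption that rewards lie in $[0,1]$. Summing the Poisson equation evaluated at each visited state $s_\tau$ over $\tau=1,\dots,\ell$ yields
\[
\ell\,\rho^*(M)=\sum_{\tau=1}^\ell r(s_\tau,\pio(s_\tau))+\sum_{\tau=1}^\ell\Big(\st\sum_{s'}p(s'|s_\tau,\pio(s_\tau))\,\lambda(s')-\lambda(s_\tau)\Big),
\]
and I would handle the two sums on the right-hand side separately.

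For the first sum, I rewrite $\sum_\tau r(s_\tau,\pio(s_\tau))=\sum_s \bar v(s)\,r(s,\pio(s))$ and swap $r$ for $\bar r$ at a cost of at most $\ell\veps$ by \eqref{eq:agg-rl}. For the second sum, I exploit that both $p(\cdot|s,a)$ and $\bar p(\cdot|s,a)$ are probability vectors to replace $\lambda$ by its recentered version $\lambda'$ satisfying $\|\lambda'\|_\infty\leq D/2$ without changing the value, and then decompose each summand as
\[
\st\sum_{s'}p(s'|s_\tau,\pio(s_\tau))\lambda'(s')-\lambda'(s_\tau)=\st\sum_{s'}\bigl(p-\bar p\bigr)(s'|s_\tau,\pio(s_\tau))\lambda'(s')+\Bigl(\st\sum_{s'}\bar p(s'|s_\tau,\pio(s_\tau))\lambda'(s')-\lambda'(s_\tau)\Bigr).
\]
By H\"older's inequality together with \eqref{eq:agg-pl}, the first piece is bounded by $\tfrac{\veps D}{2}$ per step, contributing at most $\tfrac{\ell\veps D}{2}$ in total.

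The remaining piece, summed over $\tau$, I rewrite as
\[
\st\sum_{\tau=1}^\ell\Bigl(\sum_{s'}\bar p(s'|s_\tau,\pio(s_\tau))\lambda'(s')-\lambda'(s_{\tau+1})\Bigr)+\lambda'(s_{\ell+1})-\lambda'(s_1).
\]
The first sum is a martingale difference sequence with respect to the filtration generated by the trajectory on $\bar M$, since $s_{\tau+1}$ is drawn from $\bar p(\cdot|s_\tau,\pio(s_\tau))$, and its increments are bounded in absolute value by $D$; Azuma--Hoeffding therefore yields a bound of $D\sqrt{2\ell\log(1/\delta)}$ with probability at least $1-\delta$. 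The telescoping remainder is bounded by $D$. Collecting all four contributions gives $\ell\veps + \tfrac{\ell\veps D}{2} + D\sqrt{2\ell\log(1/\delta)} + D\leq \ell\veps(D+1)+D\sqrt{2\ell\log(1/\delta)}+D$, as desired.

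The main technical subtlety is the translation-invariance step used to replace $\lambda$ by its centered version $\lambda'$: without it, the inner product against $(p-\bar p)$ would be controlled by $\|\lambda\|_\infty$, which is not bounded by $D$ in general, and one would lose the tight linear-in-$D$ scaling of the approximation error that matches the MDP-level bound in Theorem~\ref{thm:aggubo}. The rest of the argument is essentially an instantiation of the standard regret decomposition from \cite{jaorau}, adapted here so that the approximation error between $p$ and $\bar p$ enters in place of the empirical deviation of transition estimates.
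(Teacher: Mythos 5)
Your proof is correct and follows essentially the same route as the paper's: sum the Poisson equation for $\pio$ along the trajectory, swap $r$ for $\bar r$ at cost $\ell\veps$, bound the $(p-\bar p)$ term against the (recentered) bias via H\"older and $\mathrm{span}(\lambda)\le D$, and control the remaining term by telescoping plus Azuma--Hoeffding. The only cosmetic difference is that you center $\lambda$ to get $\veps D/2$ per step where the paper uses the span directly to get $\veps D$; both yield the stated bound.
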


\begin{proof}
We abbreviate $r^*(s):=r(s,\pio(s))$ and $p^*(s'|s):=p(s'|s,\pio(s))$, and use $\bar{r}^*(s)$ and  $\bar p^*(s'|s)$ accordingly.
Then 
\begin{eqnarray} 
\lefteqn{\ell\rho^*(M) -  \sum_{s\in \cS}  \bar{v}^*(s) \cdot \bar{r}^*(s)   =    \sum_{s} \bar{v}^*(s) \big(\rho^*(M) - \bar{r}^*(s)\big) }   \nonumber\\
&=&   \sum_{s} \bar{v}^*(s)\big(\rho^*(M)- {r}^*(s) \big)  +   \sum_{s} \bar{v}^*(s)\big( {r}^*(s) - \bar{r}^*(s) \big).   \label{eq:ll1}
\end{eqnarray}
Now, for the first term in \eqref{eq:ll1} we can use the Poisson equation (for the optimal policy $\pio$ on $M$) to replace 
    $\rho^*(M)- {r}^*(s) =  \sum_{s'}  p^*(s'|s)\cdot\lambda^*(s') - \lambda^*(s)$,
writing $\lambda^*:=\lambda_{\pi^*}$ for the bias of $\pi^*$ on $M$.
Concerning the second term in \eqref{eq:ll1} we can use \eqref{eq:agg-rl} and the fact that $\sum_{s} \bar v^*(s)=\ell$. In summary, we get
\begin{eqnarray}
   \lefteqn{\ell\rho^*(M) -  \sum_{s\in \cS} \bar{v}^*(s) \cdot \bar{r}^*(s) 
      <   \sum_{s} \bar{v}^*(s) \Big(  \sum_{s'} p^*(s'|s)\, \lambda^*(s') - \lambda^*(s) \Big)  +  \ell\veps } \qquad\qquad \nonumber \\
      &=& 	\ell\veps +  \sum_{s} \bar{v}^*(s) \Big( \sum_{s'} \bar{p}^*(s'|s)\, \lambda^*(s') - \lambda^*(s) \Big)  \nonumber\\
	     && +   \sum_{s} \bar{v}^*(s) \Big(  \sum_{s'} p^*(s'|s)\, \lambda^*(s')  -  \sum_{s'} \bar{p}^*(s'|s)\, \lambda^*(s')  \Big).
	      \label{eq:ll2}
\end{eqnarray}
By \eqref{eq:agg-pl} and using that $\rm{span}(\lambda^*)\leq D$, the last term in \eqref{eq:ll2} is bounded as
\begin{eqnarray}\label{eq:ll3}
 \sum_{s} \bar v^*(s)   \sum_{s'} \Big( {p}^*(s'|s,a)  - \bar{p}^*(s'|s,a) \Big)  \lambda^*(s')   &<&  \ell \veps D.
\end{eqnarray}
On the other hand, for the second term in \eqref{eq:ll2}, writing $s_\tau$ for the state visited at time step $\tau$ we have
\begin{eqnarray}
\lefteqn{ \sum_{s} \bar v^*(s) \Big( \sum_{s'} \bar{p}^*(s'|s)\,\lambda^*(s') - \lambda^*(s) \Big)
  =  \sum_{\tau=1}^{\ell}  \Big( \sum_{s'} \bar{p}^*(s'|s_\tau)\, \lambda^*(s') - \lambda^*(s_\tau)\Big) }  \qquad\qquad \nonumber\\
  &=&  \sum_{\tau=1}^{\ell}   \Big( \sum_{s'} \bar{p}^*(s'|s_\tau)\, \lambda^*(s') - \lambda^*(s_{\tau+1})\Big)  
      + \lambda^*(s_{\ell+1})  -  \lambda^*(s_{1}).  \label{eq:ll4}
\end{eqnarray}
Now $\lambda^*(s_{\ell+1}) - \lambda^*(s_{1}) \leq \rm{span}(\lambda^*)\leq D$, while
the sequence 
\[
    \st   X_\tau := \sum_{s'} \bar{p}^*(s'|s_\tau)\, \lambda^*(s') - \lambda^*(s_{\tau+1})
\]
is a martingale difference sequence with $|X_t| \leq {D}$.
Thus, an application of Azuma-Hoeffding's inequality (e.g., Lemma 10 of \cite{jaorau}) to \eqref{eq:ll4} yields that
\begin{eqnarray}\label{eq:martingale-ll}
 \sum_{s} \bar v^*(s) \Big( \sum_{s'} \bar{p}^*(s'|s)\,\lambda^*(s') - \lambda^*(s) \Big)  &\leq&  D \sqrt{2\ell\log(1/\delta)}\,  + D 
\end{eqnarray}
with probability higher than $1-\delta$.
Summarizing, \eqref{eq:ll2}, \eqref{eq:ll3}, and \eqref{eq:martingale-ll} give the claimed
\begin{eqnarray*}
 \ell\rho^*(M) -  \sum_{s\in \cS}  \sum_{a\in \cA} \bar{v}(s)^* \cdot \bar{r}^*(s)
&<&    \ell\veps(D + 1)  + D \sqrt{2\ell\log(1/\delta)}\,  +  D.
\end{eqnarray*}
 \qed
\end{proof}

As a corollary to Lemma~\ref{lem:1} we can also bound the approximation error in average reward,
which we will need below to deal with the error of $\veps$-approximate models.
\begin{lemma}\label{lem:2}
Let $M$ be a communicating MDP with optimal policy $\pi^*$, and $\bar M$ an $\veps$-approximation of $M$ over the same state space.
Then
\[
    \big| \rho^*(M) -  \rho^*(\bar M) \big|  \;\leq\;  \big| \rho^*(M) -  \rho(\bar M,\pi^*) \big|  \;\leq\; \veps\, (D(M) + 1).
\]
\end{lemma}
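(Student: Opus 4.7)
Since Lemma~\ref{lem:1} already gives a high-probability finite-horizon comparison between $\ell\rho^*(M)$ and the reward accrued by $\pi^*$ in $\bar M$, the plan is to pass to the limit $\ell\to\infty$ to obtain a bound on the \emph{average} reward $\rho(\bar M,\pi^*)$, and then deduce the bound on $\rho^*(\bar M)$ by optimality of $\bar{\pi}^*$ on $\bar M$. The right-hand inequality in the statement is the substantive one; the left-hand inequality is essentially a structural consequence of optimality together with the symmetric direction of Lemma~\ref{lem:1}.

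For the right-hand inequality $|\rho^*(M) - \rho(\bar M,\pi^*)| \leq \veps(D(M)+1)$, I would first observe that each of the three ingredients in the proof of Lemma~\ref{lem:1} is in fact two-sided: the reward discrepancy $|r^*(s) - \bar r^*(s)| < \veps$, the transition bound $\|p^*(\cdot|s) - \bar p^*(\cdot|s)\|_1 < \veps$ combined with $\mathrm{span}(\lambda^*) \leq D$, and Azuma--Hoeffding applied to the martingale $X_\tau = \sum_{s'} \bar{p}^*(s'|s_\tau)\lambda^*(s') - \lambda^*(s_{\tau+1})$. Replacing each with its absolute-value version yields $|\ell\rho^*(M) - \sum_s \bar v^*(s)\bar r^*(s)| < \ell\veps(D+1) + D\sqrt{2\ell\log(2/\delta)} + D$ with probability at least $1-\delta$. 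Choosing $\delta = 1/\ell^2$ and invoking Borel--Cantelli, dividing through by $\ell$, and letting $\ell \to \infty$, the concentration and boundary terms vanish. Because $\pi^*$ induces a finite-state Markov chain on $\bar M$, $\frac{1}{\ell}\sum_s \bar v^*(s)\bar r^*(s)$ converges almost surely to $\rho(\bar M,\pi^*)$ by the ergodic theorem, so $|\rho^*(M) - \rho(\bar M,\pi^*)| \leq \veps(D(M)+1)$.

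For the left-hand inequality $|\rho^*(M) - \rho^*(\bar M)| \leq |\rho^*(M) - \rho(\bar M,\pi^*)|$, optimality of $\bar\pi^*$ on $\bar M$ gives $\rho(\bar M,\pi^*) \leq \rho^*(\bar M)$. Combined with the lower bound $\rho(\bar M,\pi^*) \geq \rho^*(M) - \veps(D+1)$ obtained above, this yields $\rho^*(\bar M) \geq \rho^*(M) - \veps(D+1)$. In the case $\rho^*(\bar M) \leq \rho^*(M)$, the sandwich $\rho(\bar M,\pi^*) \leq \rho^*(\bar M) \leq \rho^*(M)$ makes the inequality immediate. In the reverse case $\rho^*(\bar M) > \rho^*(M)$, I would run $\bar\pi^*$ on $M$ and repeat the argument of Lemma~\ref{lem:1} with the roles of $M$ and $\bar M$ exchanged, using the Poisson equation for $\bar\pi^*$ on $\bar M$.

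The main obstacle is this last step: the symmetric application introduces $\mathrm{span}(\bar\lambda^*)$, the span of the bias of $\bar\pi^*$ on $\bar M$, which is a priori controlled by $D(\bar M)$ rather than $D(M)$. To close the bound with the desired $D(M)$, one must exploit that $\bar M$ is an $\veps$-approximation of a communicating MDP of diameter $D(M)$: shortest paths in $M$ lift to paths in $\bar M$ whose expected hitting times are inflated by only a factor depending on $\veps$, so $\mathrm{span}(\bar\lambda^*) \leq D(M) + O(\veps D(M)^2)$, and the extra term is absorbed into $\veps(D(M)+1)$. This perturbation estimate on the diameter is the only delicate piece; everything else is a routine repetition of the computation in Lemma~\ref{lem:1}.
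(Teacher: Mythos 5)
Your treatment of the substantive inequality $|\rho^*(M)-\rho(\bar M,\pi^*)|\le\veps(D(M)+1)$ is exactly the paper's route: the paper's entire proof of Lemma~\ref{lem:2} is ``divide the result of Lemma~\ref{lem:1} by $\ell$, choose $\delta=1/\ell$, and let $\ell\to\infty$'', and your version (observing that the estimates in Lemma~\ref{lem:1} are two-sided, taking $\delta=1/\ell^2$ with Borel--Cantelli, and identifying the limit of the empirical average) is the same argument carried out slightly more explicitly. That part is correct and matches.

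The gap is in your handling of the case $\rho^*(\bar M)>\rho^*(M)$. The step you yourself flag as the ``only delicate piece''---the claim $\mathrm{span}(\bar\lambda^*)\le D(M)+O(\veps D(M)^2)$---is asserted, not proved, and it is false without qualification: an $\veps$-perturbation of the kernel can annihilate the transitions of probability $\Theta(\veps)$ that realize the diameter of $M$, so for $\veps D(M)\gtrsim 1$ the diameter of $\bar M$ (your only a priori handle on $\mathrm{span}(\bar\lambda^*)$) can be infinite; the lemma is vacuous in that regime since $\veps(D+1)$ exceeds the reward range, but your write-up neither makes that case split nor proves the perturbation bound in the complementary regime. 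Note also that even granting the estimate, the symmetric application of Lemma~\ref{lem:1} yields $\rho^*(\bar M)-\rho^*(M)\le\veps(D+1)$, not the literal intermediate inequality $|\rho^*(M)-\rho^*(\bar M)|\le|\rho^*(M)-\rho(\bar M,\pi^*)|$ (whose right-hand side can vanish while the left does not). The cleaner fix avoids the bias of $\bar M$ entirely: from the optimality form of the Poisson equation for $M$, $r(s,a)+\sum_{s'}p(s'|s,a)\lambda^*(s')\le\rho^*(M)+\lambda^*(s)$ for \emph{every} action $a$; apply this to the actions of $\bar\pi^*$, replace $r$ by $\bar r$ and $p$ by $\bar p$ at cost $\veps$ and $\veps\,\mathrm{span}(\lambda^*)\le\veps D(M)$ respectively, and average against the stationary distribution of $\bar\pi^*$ in $\bar M$ so that the term $\lambda^*(s)-\sum_{s'}\bar p(s'|s,\bar\pi^*(s))\lambda^*(s')$ vanishes. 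This gives $\rho^*(\bar M)\le\rho^*(M)+\veps(D(M)+1)$ using only $\mathrm{span}(\lambda^*)\le D(M)$, with no control on $D(\bar M)$ required.
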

\begin{proof}
Divide the result of Lemma~\ref{lem:1} by $\ell$, choose $\delta=1/\ell$, and let $\ell\to\infty$.
Since the average reward of a policy is no random value, the result holds surely and not just with probability 1. 
\qed
\end{proof}

\subsection{Proof of Theorem~\ref{thm:aggubo}}
The idea is to define a new MDP $M'=(\cS,\cA,r',p')$ on $\cS$ whose rewards~$r'$ and transition probabilities $p'$ are $\veps$-close
to $M$ and that has the same optimal average reward as $\bar M$. 
Thus, for each state $s\in\cS$ and each action $a$ we set $r'(s,a):=\bar r(\alpha(s),a)$ and
\[
     p'(s'|s,a) := \frac{p(s'|s,a)\cdot \bar p(\alpha(s') | \alpha(s),a)}{\sum_{s'':\alpha(s'')=\alpha(s')} p(s'' | s,a)} . 
\]
Note that $p'(\cdot|s,a)$ is indeed a probability distribution over $\cS$, that is, in particular it holds that
\begin{eqnarray*}
     \sum_{s'\in\cS}  p'(s'|s,a) &=&  \sum_{s'\in\cS}\frac{p(s'|s,a)}{\sum_{s'':\alpha(s'')=\alpha(s')} p(s'' | s,a)} \cdot \bar p(\alpha(s') | \alpha(s),a) \\
   &=&  \sum_{\s'\in\bar{\cS}} \sum_{s': \alpha(s')=\s'} \frac{p(s'|s,a)}{\sum_{s'':\alpha(s'')=\s'} p(s'' | s,a)}  \cdot \bar p(\s' | \alpha(s),a) \\
          &=& \sum_{\s'\in\bar{\cS}}   \bar p(\s' | \alpha(s),a) \, = \,  1 .
\end{eqnarray*}
Now by definition, the rewards $r'(s,a)=\bar{r}(\alpha(s),a)$ and aggregated transition probabilities 
\begin{eqnarray*}
  \,\sum_{s':\alpha(s') = \s'}  \!\!\!\! p'(s'|s,a) 
    =   \!\!\!\!\! \sum_{s':\alpha(s') = \s'} \frac{p(s'|s,a) }{\sum_{s'':\alpha(s'')=\s'} p(s'' | s,a)} \cdot \bar p(\s' | \alpha(s),a)  
    =  \bar{p}(\s'|\alpha(s),a)
\end{eqnarray*}
 in $M'$ have the same values for all states $s$ 
that are mapped to the same meta-state by $\alpha$. It follows that $\rho^*(M')=\rho^*(\bar M)$. 

Further by assumption, according to \eqref{eq:agg-rl}
we have 
\[
    |r(s,a)-r'(s,a)| \,=\, |r(s,a)-\bar{r}(\alpha(s),a)|  \,<\,  \veps
\]
and 
\begin{eqnarray*}
  \lefteqn{ \sum_{s'\in\cS} \big|p(s'|s,a) - p'(s'|s,a)\big|  =  \sum_{s'\in\cS} p(s'|s,a) 
                                      \bigg| 1  -  \frac{\bar p(\alpha(s') | \alpha(s),a)}{\sum_{s'':\alpha(s'')=\alpha(s')} p(s'' | s,a)} \bigg|  } \qquad\qquad\\
   &=&  \sum_{\s'\in\bar{\cS}} \sum_{s':\alpha(s')= \s'}  p(s'|s,a) \cdot \bigg|
                                                \frac{ \sum_{s'':\alpha(s'')=\s'} p(s'' | s,a) -  \bar p(\s' | \alpha(s),a)}{\sum_{s'':\alpha(s'')=\s'} p(s'' | s,a)} \bigg|  \\
   &=& \sum_{\s'\in\bar{\cS}} \Big| \! \sum_{s'':\alpha(s'')=\s'} p(s'' | s,a) -  \bar p(\s' | \alpha(s),a) \Big| 
   \;<\;  \veps.
\end{eqnarray*}
Thus, $M'$ is an $\veps$-approximation of $M$ that has the same optimal average reward as $\bar M$ so that
application of Lemma~\ref{lem:2} to $M$ and $M'$ gives the claimed result.\qed

\section{Proof of Lemma~\ref{lem:chernov}}\label{app:proof-chernov}
For any given number of observations $n$ it holds that (cf.~Appendix C.1 of~\cite{jaorau}) for any $\theta>0$
\begin{eqnarray*}
  \st \P\bigg\{ \what{r}(\s,a) - \E[\what{r}(\s,a)] \leq \sqrt{\tfrac{\log(2/\theta)}{n}} \bigg\} < \theta, \\
  \st \P\bigg\{ \Big\|\what{p}(\cdot|\s,a) - \E[\what{p}(\cdot|\s,a)]\Big\|_1  \leq \sqrt{\tfrac{2 S_\phi \log(2 /\theta)}{n}}  \bigg\} < \theta.
\end{eqnarray*}
Choosing suitable values for $\theta$, a union bound over all states $\s$, all actions $a$ and all possible values for $N_t(\s,a)=1,\ldots,t$ shows 
the lemma. \qed

\section{Proof of Lemma~\ref{lem:3}}\label{app:lem3}
Let $\bar{M}$ be the MDP on $\cS_\phi$
whose rewards and transition probabilities are given by the expectation values
 $\E[\what r(\s,a)]$ and $\E[\what p(\s'|\s,a)]$, respectively.
We have already seen in \eqref{eq:model-r} and \eqref{eq:model-p} that $\bar{M}$ is an $\veps$-approximation of the true MDP $M$,
so that by Theorem~\ref{thm:aggubo}  
\begin{equation}\label{eq:no1}
   \big| \rho^*(M) - \rho^*(\bar{M}) \big|  \leq \veps (D(M)+1).
\end{equation}

It remains to deal with the difference between $\rho^*(\bar{M})$ and $\what\rho^+_{t}(\phi)$.
By assumption, the confidence intervals of Lemma~\ref{lem:chernov} hold for all state-action-pairs so that 
$\bar{M}$ is contained in the set of plausible MDPs $\mathcal M_{t,\phi}$ 
(defined via the empirical rewards and transition probabilities $\what r(\s,a)$ and $\what p(\s'|\s,a)$). 
It follows together with \eqref{eq:rhostar-approx} that 
\begin{equation}\label{eq:leq1}
   \what \rho_t^+(\phi) \;\geq\; \rho^*(M_t^+(\phi)) - \tfrac{2}{\sqrt t} \;\geq\; \rho^*(\bar{M}) - \tfrac{2}{\sqrt t}  ,
\end{equation}
which together with \eqref{eq:no1} proves the claimed inequality. \qed

\section{Proof of Lemma~\ref{lem:error}}\label{app:error}
By Lemma~\ref{lem:error-t}, at each step~$t$ of a run~$j$ in an episode~$k$ starting at step~$t_{kj}=t'$ the test is passed with probability at least $1-\frac{\delta}{6t'^2}$. 
Assuming that $t''$ is the last step in that run and setting $\ell:=t''-t'+1$ to be the total number of steps in that run, the test is passed in all steps of the run with error probability bounded by (using that $2t'\geq 2^j \geq \ell$) 
\begin{equation*}
 \ell\cdot \frac{\delta}{6t'^2} \,\leq\, \frac{\ell\delta}{2t'(t' + \ell)}
= \frac{\delta}{2t'}  - \frac{\delta}{2(t'+\ell)}  =  \int_{t'}^{t''} \frac{\delta}{2\tau^2} d\tau \,\leq\, \sum_{\tau=t'}^{t''} \frac{\delta}{2\tau^2}\,.
\end{equation*}
Summing over all episodes and runs shows that the test is passed in all time steps with probability at least $1- \sum_{\tau=1}^\infty \frac{\delta}{2\tau^2} \geq 1-\delta$. \qed

\section{Proof of Lemma~\ref{lem:bd}}\label{app:bd}
We define an MDP $\wt{M}$ on state space $\cS_\phi$ as follows. First let $\beta:\cS_\phi\to\cS$ be an arbitrary mapping that maps states in $\cS_\phi$ to some state in $\cS$ such that $\alpha(\beta(\s))=\s$. Intuitively, $\beta(\s)$ is an arbitrary reference state that is mapped to $\s$ by $\alpha$. Then for $\s,\s'$ in $\cS_\phi$ we set the transition probabilities of  $\wt{M}$ as
\begin{equation}
 \wt p(\s'|\s,a) := \sum_{s':\alpha(s')=\s'} p(s'|\beta(\s),a).
\end{equation}
Then by \eqref{eq:model-p} and Lemma~\ref{lem:chernov} we obtain
\begin{eqnarray*}
 \lefteqn{ \big\| \wt p(\cdot|\s,a) - \what p_t(\cdot|\s,a) \big\|_1  =  \sum_{\s'} \Big| \sum_{s':\alpha(s')=\s'} p(s'|\beta(\s),a)  -  \what p_t(\cdot|\s,a) \Big|  } \\
  &\leq&   \sum_{\s'} \Big( \Big|\!\!\! \sum_{s':\alpha(s')=\s'} \!\!\! p(s'|\beta(\s),a)  - \E[ \what p_t(\s'|\s,a)] \Big|   
                       +   \Big| \E[ \what p_t(\s'|\s,a)] - \what p_t(\s'|\s,a)  \Big| \Big) \\
  &\leq& \veps(\phi)  +  \sqrt{\tfrac{2S_\phi\log(48 S_{\phi} A t^3/\delta)}{N_{t}(s,a)}},
\end{eqnarray*}
showing that $\wt{M}$ is contained in $\mathcal{M}_{t,\phi}$. 
To see that $D(\wt{M})\leq D$, note that $\beta$ maps all transitions in $\wt{M}$ to transitions of the the same or lower probability in the true MDP.
That is, for any $\s,\s'\in \cS_\phi$ it holds that $\wt p(\s'|\s,a) \geq p(\beta(\s')|\beta(\s),a)$.
Thus, each trajectory in $\wt{M}$ can be mapped to a trajectory in the true MDP that cannot have higher probability, which
proves the first claim of the lemma. The second claim follows immediately along the lines of Section~4.3.1 in \cite{jaorau}. \qed

\section{Proof of Lemma~\ref{lem:geps}}\label{app:geps}
By definition of the algorithm, $\geps(\phi)$ for each model $\phi$ has initial value $\veps_0$
and is doubled whenever $\phi$ fails a test. Thus, by assumption if $\veps_0 \leq \veps(\phi)$,
then as soon as $\veps(\phi)\leq\geps(\phi)<2\veps(\phi)$ the value of $\geps(\phi)$ will not change anymore,
and consequently $\geps(\phi)<2\veps(\phi)$ always holds.

On the other hand, if $\veps_0 > \veps(\phi)$ then also $\geps(\phi)\geq \veps(\phi)$ for the initial value $\geps(\phi)=\veps_0$
and again by assumption $\geps(\phi)=\veps_0$ remains unchanged, so that $\geps(\phi)\leq\veps_0$ holds.   \qed

\section{Proof of Lemma~\ref{lem:episodes}}\label{app:episodes}
First recall that an episode is terminated when either the number of visits in some state-action pair $(s,a)$ has been doubled
(line 12 of the algorithm) or when the test on the accumulated rewards has failed (line 9). By assumption, the test is passed 
provided that $\geps(\phi)\geq \veps(\phi)$. If $\veps(\phi)\leq \veps_0$, then $\geps(\phi)\geq \veps(\phi)$ holds trivially. 
Otherwise, $\phi$ will fail the test only $\log_2 \big(\frac{\veps(\phi)}{\veps_0}\big)$  
times until $\geps(\phi)\geq \veps(\phi)$ (after which the test is passed w.h.p.\ and $\geps(\phi)$ remains unchanged by Lemma~\ref{lem:error-t}).
Therefore, the number of episodes terminated due to failure of the test is upper bounded by 
$\sum_{\phi:\veps(\phi)>\veps_0}\log_2 \big(\frac{\veps(\phi)}{\veps_0}\big)$.

For the number of episodes terminated since the number of visits in some state-action pair $(s,a)$ has been doubled,
one can show that it is bounded by  $S A \log_2\!\big(\tfrac{2T}{S A}\big)$, cf.\ Appendix~C.2 of \cite{jaorau} or Section~5.2 of~\cite{oms},
and the lemma follows.\qed

\section{Proof of Theorem~\ref{thm:stateimprovement}}\label{app:stateimprovement}
As the proof of the regret bound for \texttt{OMS} given in \cite{oms} follows the same lines as the proof of Theorem~\ref{thm:mainFinite} given here, 
we only sketch the key step leading to the improvement of the bound.
Note that by \eqref{eq:key} and since average rewards are by assumption in $[0,1]$, for the model $\phi_{kj}$ chosen in some run $j$ of some episode~$k$ 
it holds that $\pen(\phi_{kj},t_{kj}) \leq \pen(\phi^\circ,t_{kj})-1$. 
Hence, by definition of the penalization term \eqref{eq:pen} and since $1\leq \span(\v{u}^+_{t,\phi^\circ})\leq D$, the chosen model $\phi_{kj}$ always satisfies
\begin{eqnarray*}
\lefteqn{  \big( 2 \sqrt{2S_{{kj}}}\, + \tfrac{3}{\sqrt{2}}\big) \sqrt{S_{{kj}} A \log\big(\tfrac{48 S_{{kj}} A t^3}{\delta}\big)} 
 + 2\sqrt{2\log(\tfrac{24t_{kj}^2}{\delta})}    }  \\
&\leq& 
\big( 2 D\,\sqrt{2 S^\circ}\, + \tfrac{3}{\sqrt{2}}\big) \sqrt{S^\circ A \log\big(\tfrac{48 S^\circ A t^3}{\delta}\big)}  + 2D\sqrt{2\log(\tfrac{24t_{kj}^2}{\delta})} 
+  2^{-j/2}D.
\end{eqnarray*}
Some simplifications then show that $S_{kj}$ is $\tilde{O}(D^2S^\circ)$, so that one can replace the total number of all states $S=\sum_\phi S_\phi$ in Theorem~\ref{thm:mainFinite} 
(respectively in the regret bound of \cite{oms})
by the total number of states of models $\phi$ with $S_{\phi}=\tilde{O}(D^2S^\circ)$ and consequently by $\tilde{O}(|\Phi|D^2S^\circ)$. \qed

\end{appendix}


\end{document}